%% file: arxiv_main.tex
\pdfoutput=1
\documentclass[11pt,onecolumn]{cleantechnicalreport}

\usepackage[authoryear,sort&compress,round]{natbib}
\usepackage{hybridfrontmatter}
\usepackage{wrapstuff}
\usepackage{placeins}
\input{math_commands.tex}

\usepackage{algorithm}
\usepackage{algpseudocode}

\providecommand{\sg}[1]{\operatorname{sg}\!\left[#1\right]}
\providecommand{\E}{\mathbb{E}}
\providecommand{\Cov}{\operatorname{Cov}}
\providecommand{\avg}[1]{\frac{1}{|\mathcal T|}\sum_{t\in\mathcal T} #1}

\newtheorem{proposition}{Proposition}
\newtheorem{lemma}{Lemma}
\newtheorem{theorem}{Theorem}

\newtheorem{assumption}{Assumption}

\newenvironment{Lemma}{\begin{lemma}}{\end{lemma}}

\newcommand{\Method}{SCALE}

\graphicspath{{}}

\renewcommand{\ReportTitleFont}{%
  \normalfont\bfseries\fontsize{14}{17}\selectfont}

\makeatletter
\fancypagestyle{hybridfirststyle}{%
  \fancyhf{}%
  \fancyfoot[L]{%
    \ifdefempty{\report@footertext}{}{%
      {\color{black}\normalfont\fontsize{8}{10}\selectfont\report@footertext}%
    }%
  }%
  \fancyfoot[C]{}%
  \renewcommand{\footrule}{\color{gray}\DefaultFootRule}%
}
\makeatother

\hypersetup{
  colorlinks=true,
  linkcolor=blue,
  citecolor=blue,
  urlcolor=blue,
  pdftitle={Rethinking Token Reweighting for SFT: Suppress, Reverse, and Extrapolate Learned Features},
  pdfauthor={Anonymous Authors}
}

\let\cite\citep
\title{Rethinking Token Reweighting for SFT: \\Suppress, Reverse, and Extrapolate Learned Features}

\author{%
\begin{minipage}{0.96\textwidth}
\vspace*{0.3em}
\centering
\normalfont
{\small\textbf{Cunchun Li}\textsuperscript{1,2,*}\enspace
\textbf{Haonan He}\textsuperscript{1,2,*,$\dagger$}\enspace
\textbf{Yifan Gao}\textsuperscript{1,2}\enspace
\textbf{Minglei Li}\textsuperscript{1,3}\enspace \\
\textbf{Jingqi Ye}\textsuperscript{1,2}\enspace
\textbf{Qingyu Yang}\textsuperscript{1,4}\enspace 
\textbf{Peng Ye}\textsuperscript{1,5,$\ddagger$}\enspace
}
\end{minipage}%
}

\affil{%
\textsuperscript{1}\textbf{Shanghai AI Laboratory} \quad
\textsuperscript{2}\textbf{University of Science and Technology of China} \\
\textsuperscript{3}\textbf{Fudan University} \quad
\textsuperscript{4}\textbf{KTH Royal Institute of Technology} \quad
\textsuperscript{5}\textbf{The Chinese University of Hong Kong} \\
{\footnotesize
\textsuperscript{*}Equal Contribution,\quad
\textsuperscript{\textdagger}Project Lead,\quad
\textsuperscript{$\ddagger$}Corresponding Author \\
\faEnvelope\hspace{0.35em}\texttt{yepeng@pjlab.org.cn}
}
}

\begin{abstract}
Supervised fine-tuning (SFT) learns most aggressively from tokens that the model deems least likely. This helps acquire new behaviors, but also amplifies noisy or conflicting supervision and can overwrite useful pretrained knowledge. Through a unified policy-loss view, we revisit existing token-reweighting methods and show that they assign nonnegative coefficients to demonstrated tokens. Consequently, they can suppress or amplify supervised updates, but cannot reverse harmful features once learned. Moreover, larger training weights do not amount to feature extrapolation, since they change the optimization trajectory rather than scale a fixed SFT direction. We argue that reversal and extrapolation require a stable reference frame defined by a fixed SFT delta. Motivated by this, we propose \Method{} (\textbf{S}elective \textbf{C}ontrol of \textbf{A}daptation via \textbf{L}ocal \textbf{E}ntropy), an entropy-guided adaptation-strength-control method that freezes the pretrained model and the SFT delta and learns bounded token- and module-specific gates by minimizing predictive entropy alone. These gates suppress, reverse, or extrapolate frozen SFT features according to their alignment with entropy reduction. Across Qwen2.5-Math-1.5B, Qwen2.5-Math-7B, and Qwen3-4B-Base, \Method{} achieves mathematical-reasoning averages of \(37.84\), \(43.60\), and \(36.57\), exceeding the strongest corresponding baselines while remaining competitive on general-retention benchmarks. It also attains the best average code-generation performance across HumanEval, HumanEval+, and MBPP for all three models. These results suggest that effective SFT correction can benefit from controlling how already learned residuals are used, rather than only modifying how they are learned.
\end{abstract}

\begin{document}

\vspace*{-\headheight}
\maketitle
\enlargethispage{40pt}

\begin{center}
    \includegraphics[width=0.92\linewidth]{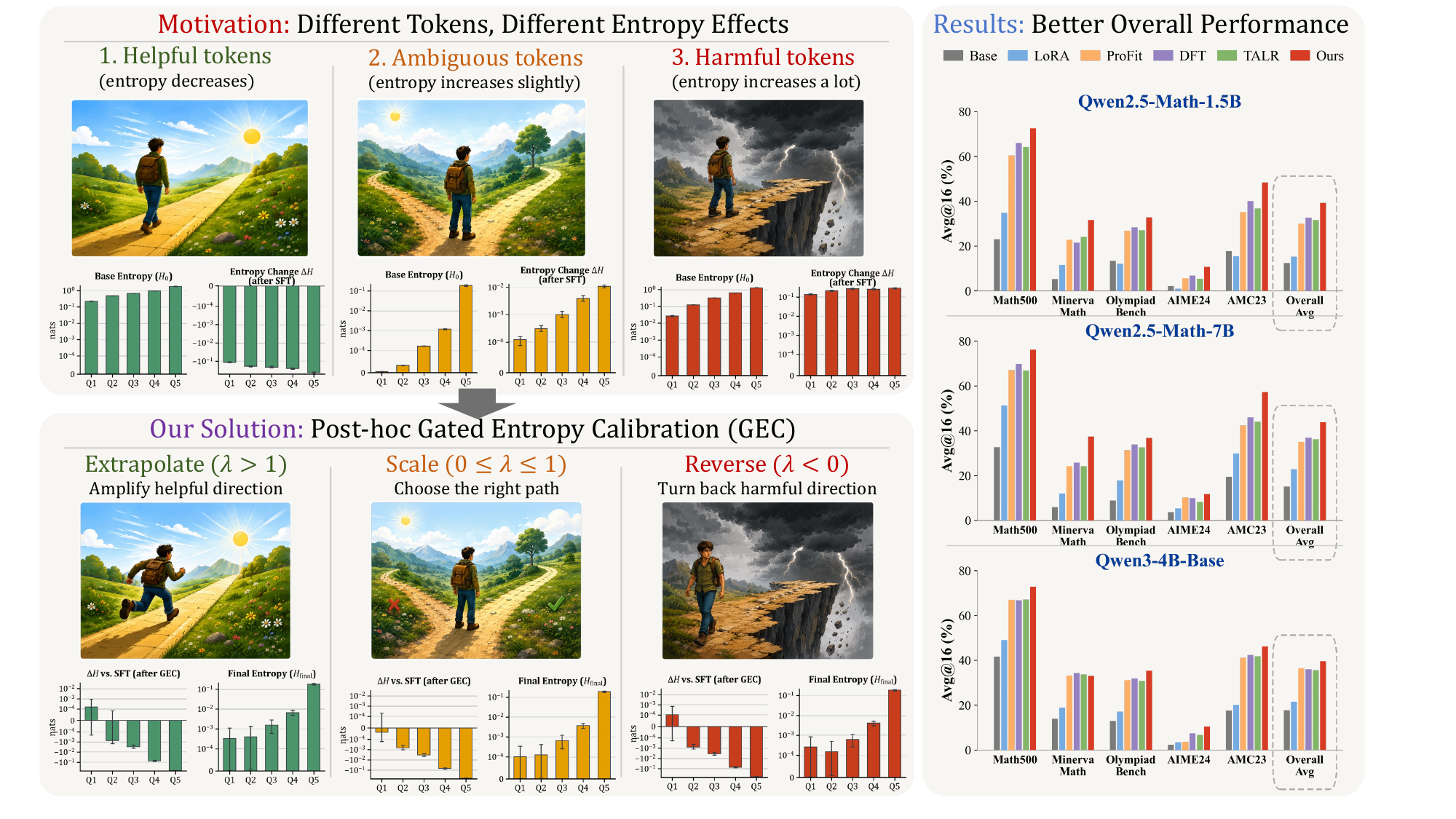}
\captionsetup{font=small,skip=6pt}
\captionof{figure}{
Motivation, illustration and performance of \Method{}.
The upper left panels show that different tokens induce distinct entropy changes
after SFT, motivating our method. The lower left panels illustrate how the learned gates exploit
this signal. The twelve bar plots summarize entropy changes before and after gating on held-out examples, providing empirical evidence. The right panels report the resulting improvements across model scales.
}
\label{fig:overview}
\end{center}

\newpage

\input{Sections/Introduction}
\input{Sections/Related_Work}
\input{Sections/Revisiting}
\input{Sections/Method}
\input{Sections/Experiment}

\section{Conclusion}

Supervised fine-tuning leaves a delta with both useful and harmful effects.
\Method{} freezes that delta and learns an entropy-guided signed gate to
suppress, reverse, or extrapolate it. Across three backbones, this control
improves mathematical reasoning and code generation while retaining general
capability, showing the value of calibrating a completed adaptation.

\clearpage
\bibliography{main}

\clearpage
\appendix
\input{Sections/Appendix}

\end{document}

%% file: math_commands.tex
\usepackage{amsmath,amsfonts,bm}

\def\eqref#1{equation~\ref{#1}}

\def\1{\bm{1}}

\DeclareMathAlphabet{\mathsfit}{\encodingdefault}{\sfdefault}{m}{sl}
\SetMathAlphabet{\mathsfit}{bold}{\encodingdefault}{\sfdefault}{bx}{n}

\newcommand{\E}{\mathbb{E}}

\newcommand{\Avg}{\mathrm{Avg@16}}

\newcommand{\Qwen}{\textsc{Qwen2.5-Math}}

\newcommand{\Cov}{\mathrm{Cov}}


%% file: Sections/Introduction.tex
\section{Introduction}

Post-training pipelines of large language models (LLMs)~\citep{zeng2026glm-5,yang2025qwen3,xu2026deepseek-v4} commonly combine supervised fine-tuning (SFT) on curated demonstrations with subsequent exploration through reinforcement learning with verifiable rewards (RLVR)~\citep{yang2025qwen3} and multi-domain integration with on-policy distillation (OPD)~\citep{agarwal2024policy}. Within this pipeline, SFT provides an essential transition from the general next-token prediction capability acquired during pretraining to crafted task-specific behaviors, structured reasoning patterns, and instruction-following capabilities~\citep{wei2022finetuned,ouyang2022training,chung2024scaling}. Despite this central role, the SFT objective remains remarkably simple: it minimizes the average negative log-likelihood of all demonstrated tokens. Let \(c_t=(x,y_{<t})\) denote the prefix context and \(y_t\) the demonstrated token. Standard SFT optimizes \(\mathcal{L}_{\mathrm{SFT}}(\theta)=-\sum_t\log\pi_\theta(y_t\mid c_t)\), whose reference-token logit gradient is \(\pi_\theta(y_t\mid c_t)-1\). Consequently, the correction toward a demonstrated token becomes stronger as the probability assigned to that token decreases, while tokens that the model already predicts confidently receive relatively small updates.

This confidence-dependent correction is a major source of both the effectiveness and the fragility of SFT~\citep{jin2025rl,zhou2023limaalignment,chu2025sftmemorizesrlgeneralizes}. It allows the model to rapidly acquire useful patterns that are unlikely to be predicted by the base model, but it also conflates informative novelty with unreliable supervision. Natural language supervision is inherently one-to-many, and a single demonstration may combine essential reasoning with interchangeable lexical choices, formatting conventions, stylistic preferences, and occasional factual or logical errors. A low-probability reference token can therefore represent two qualitatively different cases. The model may be uncertain, and the reference may provide valuable new information, or the model may confidently prefer a coherent alternative, and the reference may be noisy. Strong correction is desirable in the former case, but in the latter case, it can overwrite useful pretrained knowledge and contribute to suboptimal performance. SFT nevertheless treats both cases identically, assigning its strongest corrections precisely to the tokens whose reliability is most ambiguous.


A growing body of work addresses this problem by removing or reducing the influence of potentially harmful tokens. Data-side approaches identify low-value tokens before training using influence scores or perturbation-based saliency~\citep{pang2025token}; training-side approaches instead reweight each token's contribution to the loss. These strategies differ in when they intervene and in the reliability signal they use. Representative methods include DFT~\citep{wu2026generalizationsftreinforcementlearning}, which assigns a soft weight from the model probability of the reference token; ProFit~\citep{liu2026profitleveraginghighvaluesignals}, which masks low-probability tokens to reduce overfitting to a single reference; and EAFT~\citep{diao2026entropyadaptivefinetuningresolvingconfident}, which combines reference probability with predictive entropy to attenuate confident conflicts. Despite these differences, these representative methods share one structure: a nonnegative coefficient \(w_t\) on the demonstrated token's loss, with standard SFT setting \(w_t=1\) and data filtering the special case \(w_t\in\{0,1\}\). As our analysis in Section~\ref{sec:rw-correction} shows, this common structure also has a shared limitation. A nonnegative token weight controls the magnitude of the demonstrated-token supervised gradient; it can suppress or amplify this direct CE component but cannot reverse its sign.

Furthermore, token weights act while the task update itself is still being learned: changing
$w_t$ changes the optimization trajectory rather than scaling a fixed post-SFT delta. Reversal and extrapolation become well defined only after SFT, when the pretrained model and the learned delta are frozen to form a stable reference frame. For a gated delta $W_\ell h_{\ell,t}+\lambda_{\ell,t}\Delta W_\ell h_{\ell,t}$,
values $0\leq\lambda_{\ell,t}<1$ suppress its contribution,
$\lambda_{\ell,t}<0$ reverse it, and $\lambda_{\ell,t}>1$ extrapolate it beyond its original SFT strength. We use predictive entropy to decide how these frozen deltas should be scaled. Minimizing entropy induces the signed full-action advantage $A_t^{\mathrm{ent}}(a)=\log\pi(a\mid c_t)+H_t$, providing directional credit over the model's predictive distribution, while \Method{} restricts this signal
to gate-only adaptation over already learned SFT deltas. Under an alignment condition between entropy advantage and task value within this restricted gate subspace, entropy descent locally strengthens task-beneficial delta directions and weakens task-harmful ones. Predictive entropy is therefore not
treated as a correctness oracle, but as an unsupervised local selection signal for a bounded family of frozen delta interventions.

Building on these observations, we propose \Method{}, an entropy-guided
adaptation-strength-control method for SFT (Figure~\ref{fig:overview}). We first train a LoRA adapter using the standard SFT objective and then freeze both the pretrained model and the learned adapter. We introduce lightweight token- and module-specific gates \(\lambda_{\ell,t}\in[\lambda_{\min},\lambda_{\max}]\) with \(\lambda_{\min}<0\) and \(\lambda_{\max}>1\), modulating each learned LoRA delta as \(u_{\ell,t}=W_\ell h_{\ell,t}+\lambda_{\ell,t}\Delta_{\phi^\star,\ell}(h_{\ell,t})\). The value \(\lambda_{\ell,t}=1\) preserves the original SFT delta, values in \([0,1)\) suppress it, negative values reverse it, and values above one extrapolate it beyond its original strength. The gates are optimized exclusively by minimizing the predictive entropy of the gated model, while all pretrained and SFT parameters remain fixed. We evaluate \Method{} on mathematical reasoning, general retention, and code generation using Qwen2.5-Math-1.5B, Qwen2.5-Math-7B, and Qwen3-4B-Base. It achieves mathematical-reasoning Avg@16 averages of \(37.84\%\), \(43.60\%\), and \(36.57\%\), exceeding DFT by \(5.03\), \(6.08\), and \(1.91\) points; remains competitive on general retention; and obtains the best code-generation averages across HumanEval, HumanEval+, and MBPP for all three models (\(34.72\%\), \(47.35\%\), and \(47.34\%\)).  Our contributions are:
\begin{enumerate}
    \item We present a unified policy-loss analysis of token reweighting for SFT, showing that representative methods such as DFT, ProFit, and EAFT can be expressed as demonstration-action objectives with nonnegative reference-token coefficients.

    \item We identify a structural distinction between training-time token reweighting and post-hoc feature correction. Token weights can suppress or amplify supervised gradients, but they cannot reverse learned harmful features or extrapolate beneficial features within a fixed base-to-SFT coordinate system.

    \item We propose \Method{}, an entropy-guided adaptation-strength-control method that freezes both the pretrained model and the SFT delta, then learns bounded gates \(\lambda_{\ell,t}\in[\lambda_{\min},\lambda_{\max}]\) to perform suppression, reversal, and extrapolation using predictive entropy alone.

    \item Experiments across three model settings demonstrate consistent improvements over standard SFT and strong token-reweighting baselines, while ablations confirm the importance of extending the correction range beyond suppression.
\end{enumerate}

%% file: Sections/Related_Work.tex
\section{Related Work}
\label{app:rw-related}

\paragraph{Token-level reweighting for SFT.}
A growing line of work improves SFT by recognizing that demonstration tokens
should not contribute equally to training. Token Cleaning identifies
uninformative tokens through their estimated training influence
\citep{pang2025token}; DFT rescales token losses using the model probability
to improve the implicit reward structure of SFT
\citep{wu2026generalizationsftreinforcementlearning}; and ProFit masks
low-probability tokens to reduce overfitting to replaceable expressions
\citep{liu2026profitleveraginghighvaluesignals}. More recent methods incorporate
uncertainty or distributional preservation: EAFT uses predictive entropy to
identify confident conflicts \citep{diao2026entropyadaptivefinetuningresolvingconfident},
TALR adaptively reweights token losses to balance domain adaptation and general
capabilities \citep{lin2026sft}, and EKSFT combines entropy- and
KL-based token selection with additional regularization \citep{liu2026eksft}.
Beyond suppression and reweighting, Forgetting explicitly applies negative
updates to tokens judged harmful \citep{taheri2026forgettingnewmechanismbetter}.
These approaches intervene while the task update is being learned.
Our setting is complementary: \Method{} first completes ordinary SFT, freezes the
resulting task delta, and then learns how strongly that fixed delta should act
on each input and module. The distinction is therefore not simply positive
versus signed token weights, but training-time modification of the learning
trajectory versus post-hoc control of an already learned feature direction.

\paragraph{Post-training model editing and delta manipulation.}
A line of work studies how to manipulate fine-tuning updates after training.
Task Arithmetic represents fine-tuned models through their parameter
displacements from a pretrained model and composes multiple task vectors
through arithmetic operations \citep{ilharco2023editing}. TIES-Merging further
addresses interference among independently trained task vectors by resolving
redundant and conflicting updates \citep{yadav2023ties}. More recent methods
such as AdaMerging and WEMoE learn
how to combine multiple frozen task updates for multi-task transfer
\citep{yang2024adamergingadaptivemodelmerging,shen2024wemoe}. These methods
focus on selecting or composing different learned updates. In contrast,
\Method{} considers a different post-SFT control problem: given a single
completed SFT delta, how should its contribution be modulated for each
input and module? By learning token-conditioned gates around the original SFT
endpoint, \Method{} enables suppression, reversal, and extrapolation of the
same frozen delta rather than merging multiple experts.

%% file: Sections/Revisiting.tex
\section{Preliminaries}
\label{sec:rw-preliminaries}
Let $c_t=(x,y_{<t})$ be a demonstrated prefix, $y_t$ its next token, and
$\pi_{\theta,t}(a)=\operatorname{softmax}(z_\theta(c_t))_a$ the next-token policy.
We average over a fixed collection $\mathcal T$ of response positions, writing
$\avg{f_t}=|\mathcal T|^{-1}\sum_{t\in\mathcal T}f_t$.
All logarithms are natural. Standard supervised fine-tuning (SFT) minimizes
\begin{equation}
 \mathcal L_{\mathrm{SFT}}(\theta)=\avg{\ell_t(\theta)},
 \qquad \ell_t=-\log\pi_{\theta,t}(y_t),
 \qquad \nabla_{z_t}\ell_t=\pi_{\theta,t}-e_{y_t}.
 \label{eq:rw-sft}
\end{equation}
Our policy-loss view treats a prefix as a state and a vocabulary token as an
action. With an action distribution $\mu_t$ and action coefficient $A_t$, define
\begin{equation}
 \mathcal L_{\mathrm{policy}}(\theta)
 =-\avg{\sum_{a\in\mathcal V}
       \sg{\mu_t(a)A_t(a)}\log\pi_{\theta,t}(a)}.
 \label{eq:rw-policy}
\end{equation}
The coefficients are evaluated at the current iterate and held fixed during
its backward pass. Thus, this is a gradient surrogate, not necessarily
an equality of scalar objectives. SFT uses $\mu_t=\delta_{y_t}$ and
$A_t(y_t)=1$; token reweighting changes this coefficient. The equivalent
on-policy representation and its importance-weighting interpretation are given
in Appendix~\ref{app:rw-onpolicy}. Contexts remain teacher-forced: an on-policy
action expectation does not imply on-policy trajectories.

\subsection{Predictive entropy as a signed policy loss}
For $\pi_t=\pi_{\theta,t}$, let $H_t=-\sum_a\pi_t(a)\log\pi_t(a)$.
\begin{Lemma}[Entropy advantage]
\label{lem:rw-entropy}
For any differentiable parameterization,
\begin{equation}
 \nabla_\theta H_t
 =-\E_{a\sim\pi_t}\!\left[A_t^{\mathrm{ent}}(a)
             \nabla_\theta\log\pi_t(a)\right],
 \qquad A_t^{\mathrm{ent}}(a)=\log\pi_t(a)+H_t.
 \label{eq:rw-entropy-advantage}
\end{equation}
\end{Lemma}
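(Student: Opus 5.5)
We differentiate the entropy directly and then rewrite the result as an expectation under $\pi_t$ using the log-derivative identity. Since $\pi_t$ is a distribution over the finite vocabulary $\mathcal V$, we write $H_t=-\sum_{a\in\mathcal V}\pi_t(a)\log\pi_t(a)$ and apply the product rule termwise:
\begin{equation*}
 \nabla_\theta H_t=-\sum_{a}\nabla_\theta\pi_t(a)\,\log\pi_t(a)-\sum_a\pi_t(a)\,\nabla_\theta\log\pi_t(a).
\end{equation*}
The only analytic assumption needed is that $\pi_t(a)>0$ for every $a$, so that $\log\pi_t(a)$ is differentiable. This holds automatically under the softmax parameterization of Section~\ref{sec:rw-preliminaries}, for any differentiable map $\theta\mapsto z_\theta(c_t)$. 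Finiteness of $\mathcal V$ lets us exchange gradient and sum freely.

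Next, we substitute $\nabla_\theta\pi_t(a)=\pi_t(a)\nabla_\theta\log\pi_t(a)$ into the first sum. This turns it into $-\E_{a\sim\pi_t}[\log\pi_t(a)\nabla_\theta\log\pi_t(a)]$.

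The second sum vanishes by the score-function identity:
\begin{equation*}
 \sum_a\pi_t(a)\nabla_\theta\log\pi_t(a)=\sum_a\nabla_\theta\pi_t(a)=\nabla_\theta\sum_a\pi_t(a)=\nabla_\theta 1=0.
\end{equation*}
So far this gives $\nabla_\theta H_t=-\E_{a\sim\pi_t}[\log\pi_t(a)\nabla_\theta\log\pi_t(a)]$.

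To reach the stated advantage form, we use the same identity once more: $\E_{a\sim\pi_t}[H_t\nabla_\theta\log\pi_t(a)]=H_t\cdot 0=0$. Here $H_t$ does not depend on $a$, so it factors out of the expectation. We add this zero term inside the expectation, which replaces $\log\pi_t(a)$ by the baseline-shifted quantity $A_t^{\mathrm{ent}}(a)=\log\pi_t(a)+H_t$ and yields \eqref{eq:rw-entropy-advantage}.

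As a consistency check, $\E_{a\sim\pi_t}[A_t^{\mathrm{ent}}(a)]=-H_t+H_t=0$, which confirms that $H_t$ is exactly the mean-centering baseline. In the language of \eqref{eq:rw-policy}, this means entropy minimization is a policy loss with $\mu_t=\pi_t$ and a signed advantage, with the coefficient treated as stop-gradient.

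There is no genuine obstacle in this argument. The only step requiring care is the baseline insertion, which relies on $H_t$ being constant in $a$ even though it depends on $\theta$. This is legitimate because we only add a quantity whose value is zero at the current iterate; we never differentiate through $H_t$ inside the expectation. That is precisely the stop-gradient convention already adopted in \eqref{eq:rw-policy}.
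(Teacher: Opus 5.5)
Your proof is correct and is essentially the paper's argument: the product rule gives $-\sum_a\pi_t(a)(\log\pi_t(a)+1)\nabla_\theta\log\pi_t(a)$, and $\sum_a\pi_t(a)\nabla_\theta\log\pi_t(a)=0$ lets the constant $1$ be replaced by $H_t$. You do that replacement in two steps, dropping the $1$ and then adding $H_t$. One small correction: the lemma needs no stop-gradient convention, because $H_t(\theta)\sum_a\pi_t(a)\nabla_\theta\log\pi_t(a)$ vanishes for every $\theta$, so the identity holds exactly at every point and not only at the current iterate.
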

Consequently, Eq.~\eqref{eq:rw-policy} with $\mu_t=\pi_t$ and
$A_t=A_t^{\mathrm{ent}}$ has exactly the entropy gradient. The advantage is
centered, $\E_{\pi_t}A_t^{\mathrm{ent}}=0$, and positive when
$\pi_t(a)>\exp(-H_t)$. Its direct logit gradient is
$\partial H_t/\partial z_t(a)=-\pi_t(a)A_t^{\mathrm{ent}}(a)$.
This identity supplies signed action-level feedback, but does not identify
correct actions. Our use of it will be restricted to scaling already
learned, frozen deltas, under an explicit feature-alignment assumption.

\section{Revisiting Token Correction Methods for SFT}
\label{sec:rw-correction}
\subsection{What nonnegative token weights control}
A demonstration-action corrector has a weighted CE component
\begin{equation}
 \mathcal L_w=-\avg{\sg{w_t}\log\pi_{\theta,t}(y_t)},
 \qquad w_t\geq0.
 \label{eq:rw-weighted}
\end{equation}
DFT and ProFit use reference probabilities, whereas EAFT uses predictive
entropy to weight supervision
\citep{wu2026generalizationsftreinforcementlearning, liu2026profitleveraginghighvaluesignals, diao2026entropyadaptivefinetuningresolvingconfident}.
Appendix~\ref{app:rw-catalogue} specifies their coefficients and distinguishes
weighted CE from additional loss terms.
\begin{proposition}[Local one-sidedness of weighted CE]
\label{prop:rw-onesided}
Holding $w_t$ fixed, a token's direct logit descent direction is
\begin{equation}
 -\nabla_{z_t}(w_t\ell_t)=w_t(e_{y_t}-\pi_t).
 \label{eq:rw-ray}
\end{equation}
It raises the demonstrated logit, lowers competing logits, or vanishes.
Nonnegative reweighting cannot reverse this particular CE direction.
\end{proposition}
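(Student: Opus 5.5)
The plan is to differentiate directly with respect to the logits and then read off the sign pattern coordinatewise. Because $w_t$ is held fixed (the stop-gradient in Eq.~\eqref{eq:rw-weighted}), the weight passes through the gradient as a scalar. Using Eq.~\eqref{eq:rw-sft}, we get $\nabla_{z_t}(w_t\ell_t)=w_t(\pi_t-e_{y_t})$. Negating this gives Eq.~\eqref{eq:rw-ray}. This step is routine.

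Next we split the descent vector into coordinates. At the demonstrated coordinate it equals $w_t(1-\pi_t(y_t))$. Since $0\le\pi_t(y_t)\le 1$ and $w_t\ge 0$, this is nonnegative. At a competing coordinate $a\neq y_t$ it equals $-w_t\pi_t(a)\le 0$.

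There are two cases.
\begin{itemize}
\item If $w_t>0$, softmax probabilities are strictly positive, so $\pi_t(y_t)<1$ and every $\pi_t(a)>0$. The demonstrated logit is then strictly raised and every competitor strictly lowered.
\item If $w_t=0$, the direction vanishes.
\end{itemize}
We should also note the degenerate limit $\pi_t(y_t)\to1$, where the vector tends to zero. This falls under ``vanishes'' and does not reverse the sign.

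For the final claim, ``cannot reverse,'' we observe that the set $\{w(e_{y_t}-\pi_t):w\ge 0\}$ is a closed ray from the origin. The reversed direction $-(e_{y_t}-\pi_t)=\pi_t-e_{y_t}$ is nonzero whenever $\pi_t(y_t)<1$. It lies on this ray only if $w(e_{y_t}-\pi_t)=-(e_{y_t}-\pi_t)$, which forces $w=-1<0$ and contradicts $w_t\ge 0$. Equivalently, the inner product of any admissible direction with $e_{y_t}-\pi_t$ is $w\|e_{y_t}-\pi_t\|^2\ge0$. So no nonnegative weight ever yields a descent step that lowers the demonstrated logit relative to the CE direction.

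The main obstacle is conceptual rather than computational. We must state carefully that the claim is local and concerns only the direct logit gradient of this token's CE component. Two effects are outside its scope:
\begin{itemize}
\item Parameter-space interactions across tokens through the shared Jacobian $\partial z_t/\partial\theta$.
\item Any additional loss terms catalogued in Appendix~\ref{app:rw-catalogue}.
\end{itemize}
Either of these could change the net update. We will make this scoping explicit so the proposition is not read as a claim about full parameter updates.
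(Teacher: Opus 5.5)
Your proposal is correct and follows essentially the same route as the paper: differentiate the fixed-weight CE term with respect to the logits, read off the signs $w_t(1-\pi_t(y_t))\ge0$ and $-w_t\pi_t(a)\le0$, and scope the claim away from shared-parameter propagation. The only differences are minor: the paper also records the direct margin change $w_t(1-\pi_t(y_t)+\pi_t(a))\ge0$ for $a\neq y_t$, which is invariant to the softmax shift, while your explicit ray and inner-product argument makes the ``cannot reverse'' clause slightly more formal.
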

This is a statement about one loss component, not an impossibility theorem
for shared-parameter training. 
The distinction we need is more specific:
$w_t$ controls learning an update; it does not directly control the
application of a completed update.

\subsection{A fixed delta defines suppression, reversal, and extrapolation}
After SFT, freeze both the pretrained map $W_m$ and the learned delta
$\Delta W_m$. A scalar gate at module $m$ and position $s$ gives
\begin{equation}
 u_{m,s}=W_mh_{m,s}+\lambda_{m,s}\Delta W_mh_{m,s}.
 \label{eq:rw-gated-delta}
\end{equation}
At the same module input, the delta is exactly
$\lambda_{m,s}\Delta W_mh_{m,s}$: $\lambda=0$ disables it,
$0<\lambda<1$ suppresses it, $\lambda=1$ preserves it,
$\lambda<0$ reverses it, and $\lambda>1$ extrapolates its strength.
This extends the allowable coefficients of a fixed delta, not the set of
models obtainable by every training-time reweighting method.
We use feature for this module-level delta contribution, not an
individually disentangled semantic unit. Its operator is frozen; hidden states
and final logit directions can still change with the gates.

\subsection{Why entropy can select fixed features}
Our premise concerns feature contributions: locally, task-supporting
deltas reduce uncertainty, while interfering deltas increase it.
This is not assumed for unrestricted feature learning. A binary margin model
makes the premise concrete. Let the correct-versus-incorrect margin be
$m(\lambda)=m_0+\sum_i\lambda_i a_i$, with fixed contributions $a_i$ and
correct probability $p=\operatorname{sigmoid}(m)$. Then
\begin{equation}
 \frac{\partial H}{\partial\lambda_i}
   =-m\,p(1-p)a_i
   =-m\,\frac{\partial p}{\partial\lambda_i}.
 \label{eq:rw-margin}
\end{equation}

\begin{wrapstuff}[width=0.7\linewidth, l]
  \centering
  \includegraphics[width=\linewidth]{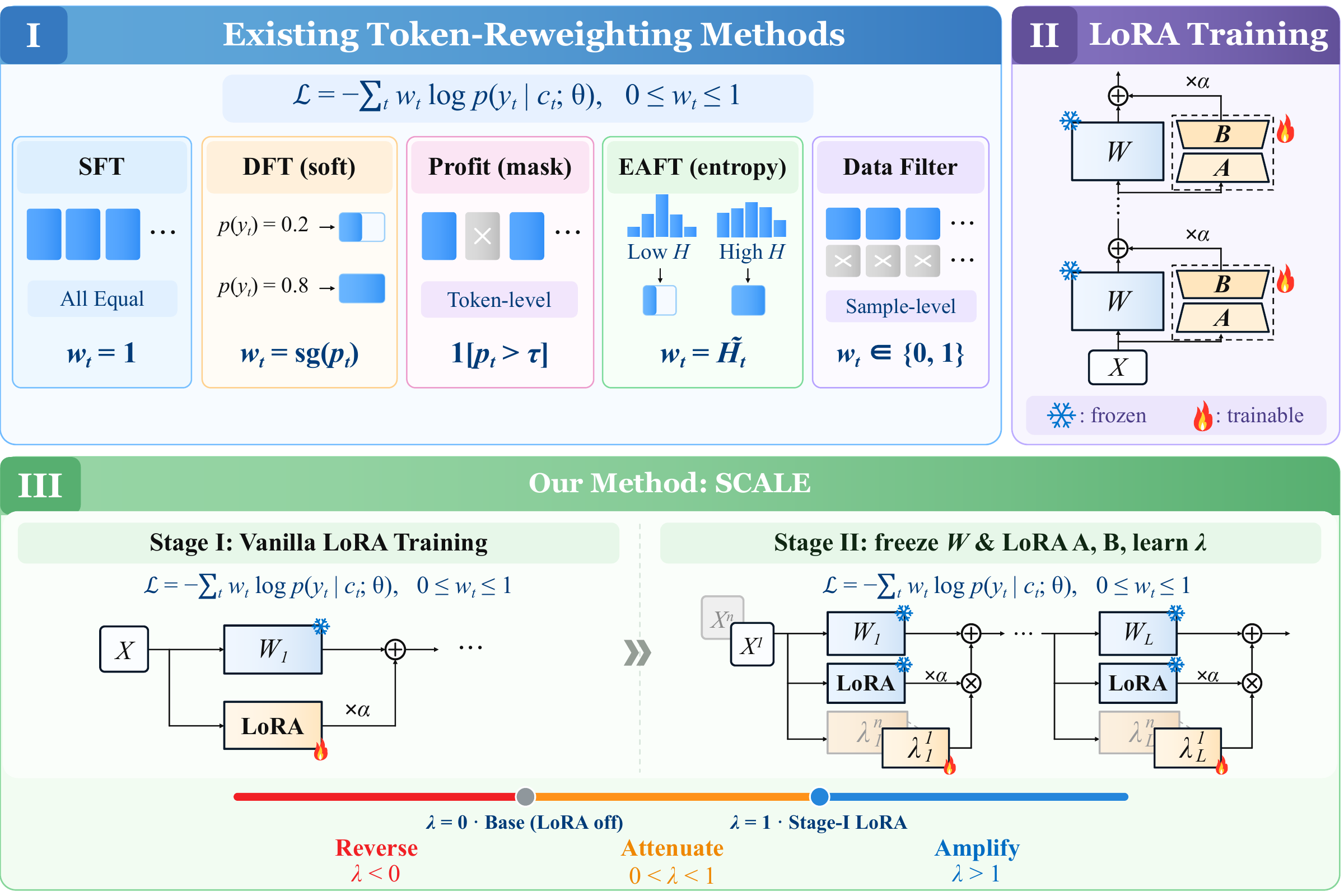}
  \captionof{figure}{Overview of baseline methods and SCALE.}
  \label{fig:method}
\end{wrapstuff}
Whenever $m>0$, entropy descent strengthens helpful contributions $a_i>0$
and weakens harmful ones $a_i<0$. Correctness defines helpfulness independently
of entropy; the positive margin supplies the needed semantic reference.
When $m<0$, the conclusion reverses. Appendix~\ref{app:rw-proofs} gives the
multiclass extension. This conditional mechanism is consistent with theoretical
work on avoiding spurious features through self-training
\citep{chen2020selftrainingavoidsusingspurious}, without importing its distributional guarantees
into a Transformer. We next express the required alignment in policy-loss
form for shared, input-conditioned gates.

%% file: Sections/Method.tex
\section{Method}
\label{sec:rw-method}
As shown in Figure~\ref{fig:method}, \Method{} separates delta acquisition from delta calibration: standard SFT
learns the delta, and predictive entropy subsequently selects its strength.


\subsection{Stage I: learning the task delta}
LoRA \citep{hu2022lora} parameterizes the delta operator as
\begin{equation}
 \Delta_{\phi,m}(h)=\Delta W_mh,
 \qquad \Delta W_m=\frac{\alpha}{r}B_mA_m.
 \label{eq:rw-lora}
\end{equation}
We train only its factors
$\phi=\{A_m,B_m\}_m$ with Eq.~\eqref{eq:rw-sft}, obtaining $\phi^\star$.
Stage II freezes the base, output head, and $\phi^\star$.
No delta direction is relearned during calibration.

\subsection{Stage II: identity-centered signed gates}
Each targeted module has an independent scalar gate generated by a linear
projection. Unlike continued SFT, this stage cannot modify the learned
delta; it only changes the coefficient applied to the frozen delta. For module $m$, let
$w_m\in\mathbb{R}^{d_m}$ and $b_m\in\mathbb{R}$ denote the gate weight and
bias:
\begin{equation}
 g_{m,s}=w_m^\top\sg{h_{m,s}}+b_m,
 \qquad
 v_{m,s}=\tanh(g_{m,s}/\tau),
 \qquad \tau>0.
 \label{eq:rw-gate-logit}
\end{equation}
We map this response to a bounded coefficient using
\begin{equation}
 \lambda_{m,s}
 =1+(1-L)\min(v_{m,s},0)+(U-1)\max(v_{m,s},0),
 \qquad L<0<1<U.
 \label{eq:rw-transform}
\end{equation}
Its image is $(L,U)$, and zero gate weights and biases recover the exact
Stage-I model. In the experiments, we use a deterministic
$10^{-6}$-scale perturbation and check initialization parity.
The bounded range limits module-level intervention magnitude; it is not an
output-divergence guarantee. Appendix~\ref{app:implementation} specifies the
implementation, including the nonsmooth join and range-ablation endpoints.

\subsection{Parameter efficiency}
\label{sec:rw-parameter-efficiency}
The proposed gating component introduces negligible extra parameters on top of
the frozen LoRA delta. For each targeted module $m$, the gate adds a linear
projection $(w_m,b_m)$, where $w_m\in\mathbb{R}^{d_m}$ and $b_m\in\mathbb{R}$,
resulting in only $d_m+1$ additional trainable parameters. Therefore, the
total number of Stage-II parameters is $\sum_{m\in\mathcal M}(d_m+1)$, where
$\mathcal M$ denotes the gated modules. In contrast, the Stage-I LoRA adapter
contains matrix-valued factors $A_m$ and $B_m$ with $r(d_m+d'_m)$ parameters per
module. Thus, the proposed calibration optimizes only a small fraction of the
parameters required to learn the original SFT delta, while keeping the base
model and LoRA adapter completely frozen during Stage II.

\subsection{Entropy-only training on response prefixes}
For $k>1$, let $S_{k,t}$ be the indices of the largest $k$ logits and define
\begin{equation}
 q_{\psi,t}(a)=\frac{\exp z_{\psi,t}(a)}
 {\sum_{b\in S_{k,t}}\exp z_{\psi,t}(b)},\quad a\in S_{k,t},
 \qquad
 F_k(\psi)=\frac{1}{\log k}\avg{H(q_{\psi,t})}.
 \label{eq:rw-topk}
\end{equation}
We use $k=100$ and response positions only. There is no CE, KL, or budget
term in Stage II, although the prefixes still come from demonstrations.
Taking $k=|\mathcal V|$ recovers normalized full-vocabulary entropy.
For $k<|\mathcal V|$, Eq.~\eqref{eq:rw-topk} is a conditional-entropy objective,
not the full entropy; all following top-$k$ derivatives are local to an
unchanged support (Appendix~\ref{app:rw-calculus}).

\subsection{Policy credit assignment within the frozen-delta family}
Write $q_t=q_{\psi,t}$. Lemma~\ref{lem:rw-entropy} gives
$A_{k,t}^{\mathrm{ent}}=\log q_{\psi,t}+H(q_{\psi,t})$.
For a gate coordinate $i$, let
$d_{t,i}=\partial z_{\psi,t}/\partial\lambda_i$ denote its local intervention
tangent. Then
\begin{equation}
 -\frac{\partial F_k}{\partial\lambda_i}
 =\frac{1}{\log k}\avg{\E_{q_t}\!\left[
 A_{k,t}^{\mathrm{ent}}(a)\frac{\partial\log q_t(a)}{\partial\lambda_i}
 \right]}
 =\frac{1}{\log k}\avg{\Cov_{q_t}(z_t,d_{t,i})}.
 \label{eq:rw-gate-credit}
\end{equation}
Equivalently, Eq.~\eqref{eq:rw-policy} uses $\mu_t=q_t$ and
$A_t=A_{k,t}^{\mathrm{ent}}/\log k$ (Appendix~\ref{app:rw-calculus}).
The average includes every causally affected response position, not only the
gate's own position. This is signed policy credit transmitted through delta
controls, rather than an unconstrained update to task features.

To characterize when entropy provides a useful correction signal, introduce fixed,
unobserved action values $r_t(a)$ and define
$V_k(\psi)=\avg{\E_{q_t}r_t(a)}$.
These values are used only for analysis. Let
$A_t^r=r_t-\E_{q_t}r_t$ denote the corresponding task advantage and define
$s_t(a)=\nabla_\psi\log q_t(a)$. For some $\beta>0$, the difference between
the entropy advantage and the task advantage in the gate parameter space can be
written as
\begin{equation}
 e_\psi=\avg{\E_{q_t}\!\left[
 (A_{k,t}^{\mathrm{ent}}-\beta A_t^r)s_t\right]},
 \qquad
 -\log k\,\nabla_\psi F_k
 =
 \beta\nabla_\psi V_k+e_\psi .
 \label{eq:rw-alignment-identity}
\end{equation}

\begin{assumption}[Gate-subspace entropy--task consistency]
\label{ass:rw-alignment}
At the operating point, the mismatch term satisfies
$\|e_\psi\|\leq\rho\beta\|\nabla_\psi V_k\|$
for some $0\leq\rho<1$.
\end{assumption}

The condition only requires consistency between entropy and task value along
the directions accessible through the gate parameters; it does not require
pointwise agreement between model confidence and correctness.
\begin{theorem}[Local task improvement by gate-only entropy descent]
\label{thm:rw-improvement}
Under Assumption~\ref{ass:rw-alignment}, differentiability, and
$\nabla_\psi V_k\neq0$, the complete-gradient update
$\psi^+=\psi-\eta\nabla_\psi F_k$ increases $V_k$ for sufficiently small
$\eta>0$. Specifically,
\begin{equation}
 V_k(\psi^+)-V_k(\psi)
 \geq
 \frac{\eta\beta(1-\rho)}{\log k}
 \|\nabla_\psi V_k\|^2
 +o(\eta).
 \label{eq:rw-value-improvement}
\end{equation}
\end{theorem}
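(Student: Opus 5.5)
The argument is a first-order Taylor expansion of $V_k$ along the entropy-descent step, followed by the alignment identity Eq.~\eqref{eq:rw-alignment-identity} and a Cauchy--Schwarz bound on the mismatch term. Write $g=\nabla_\psi V_k(\psi)$ and $f=\nabla_\psi F_k(\psi)$.

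\textbf{Step 1: Taylor expansion.} Since $V_k$ is differentiable at $\psi$, and the top-$k$ support is locally unchanged as in Appendix~\ref{app:rw-calculus}, we have
\[
V_k(\psi-\eta f)-V_k(\psi)=-\eta\, g^\top f+o(\eta).
\]
The remainder is $o(\eta)$ because $\|\eta f\|=O(\eta)$ with $f$ fixed.

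\textbf{Step 2: substitute the alignment identity.} Eq.~\eqref{eq:rw-alignment-identity} gives $-f=(\beta g+e_\psi)/\log k$. Hence
\[
-\eta\, g^\top f=\frac{\eta}{\log k}\bigl(\beta\|g\|^2+g^\top e_\psi\bigr).
\]
I will briefly verify that identity, since the argument rests on it. It follows from Lemma~\ref{lem:rw-entropy} applied to $q_t$, which gives
\[
-\log k\,\nabla_\psi F_k=\avg{\E_{q_t}[A^{\mathrm{ent}}_{k,t}s_t]}.
\]
It also uses the likelihood-ratio identity $\nabla_\psi V_k=\avg{\E_{q_t}[A^r_t s_t]}$. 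That identity holds because the $r_t$ are fixed and the baseline $\E_{q_t}r_t$ drops out, since $\E_{q_t}s_t=0$. Adding and subtracting $\beta A^r_t$ then gives exactly $\beta\nabla_\psi V_k+e_\psi$.

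\textbf{Step 3: bound the mismatch.} By Cauchy--Schwarz and Assumption~\ref{ass:rw-alignment},
\[
g^\top e_\psi\ge-\|g\|\,\|e_\psi\|\ge-\rho\beta\|g\|^2.
\]
Therefore $-\eta\, g^\top f\ge \eta\beta(1-\rho)\|g\|^2/\log k$, and combining with Step 1 yields Eq.~\eqref{eq:rw-value-improvement}.

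\textbf{Step 4: strict increase for small $\eta$.} The leading coefficient $c=\beta(1-\rho)\|g\|^2/\log k$ is strictly positive, because $\beta>0$, $\rho<1$ and $g\neq0$. The $o(\eta)$ term is eventually below $c\eta/2$, so $V_k(\psi^+)>V_k(\psi)$ for all sufficiently small $\eta>0$.

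\textbf{Main obstacle.} The delicate point is the regularity behind Step 1. $F_k$ and $V_k$ depend on the top-$k$ support $S_{k,t}$, which can change discontinuously, and on the nonsmooth join at $v=0$ in Eq.~\eqref{eq:rw-transform}. I would handle this by invoking the theorem's differentiability hypothesis at the operating point: assume no ties at the $k$-th logit boundary and $v_{m,s}\neq0$. Under those conditions $S_{k,t}$ is locally constant, and both maps are smooth in a neighborhood of $\psi$, which is exactly what Step 1 requires.
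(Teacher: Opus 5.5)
Your proposal is correct and follows essentially the same route as the paper. The paper also writes the step as $\psi^+=\psi+\eta(\beta\nabla_\psi V_k+e_\psi)/\log k$, lower-bounds $\langle\nabla_\psi V_k,\beta\nabla_\psi V_k+e_\psi\rangle$ by $\beta(1-\rho)\|\nabla_\psi V_k\|^2$ via Cauchy--Schwarz and Assumption~\ref{ass:rw-alignment}, and absorbs the $o(\eta)$ remainder. Your check of the alignment identity and your explicit regularity conditions (no top-$k$ ties, $v_{m,s}\neq0$) correspond to what the paper proves separately in the appendix and folds into its differentiability hypothesis.
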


Thus, entropy provides a valid local credit signal for scaling frozen deltas
when its directional preference is consistent with task value in the gate
subspace. The guarantee concerns local proxy-value improvement under this
condition, while the practical effectiveness of the method is evaluated
empirically.

\subsection{Why allow reversal and extrapolation after SFT?}
\begin{proposition}[Removing attenuation-boundary obstructions]
\label{prop:rw-boundary}
Let $F$ be differentiable near a local minimizer $\bar\lambda$ on $[0,1]^M$.
If $\nabla F(\bar\lambda)\neq0$, then, for $L<0<1<U$, an arbitrarily close
point in $(L,U)^M$ has strictly smaller $F$.
\end{proposition}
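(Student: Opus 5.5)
The argument uses the first-order (KKT) characterization of a local minimizer over the box $[0,1]^M$ and then escapes along a descent direction that the enlarged box $(L,U)^M$ makes available. Write $g=\nabla F(\bar\lambda)\neq0$.

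\emph{Step 1: componentwise KKT conditions.} Since $\bar\lambda$ is a local minimizer of a differentiable function on a product of intervals, the first-order necessary condition holds componentwise:
\begin{itemize}
\item $g_i=0$ whenever $\bar\lambda_i\in(0,1)$;
\item $g_i\ge0$ whenever $\bar\lambda_i=0$;
\item $g_i\le0$ whenever $\bar\lambda_i=1$.
\end{itemize}
To prove this, move only coordinate $i$ by a small feasible amount. Minimality together with the first-order Taylor expansion forces the stated sign.

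\emph{Step 2: locate the obstruction.} Because $g\neq0$, some index $j$ has $g_j\neq0$. By Step 1, $\bar\lambda_j$ must then be a boundary value. Either $\bar\lambda_j=0$ with $g_j>0$, or $\bar\lambda_j=1$ with $g_j<0$.

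\emph{Step 3: escape the box.} Take the direction $d=-g_je_j$. In the first case this decreases $\lambda_j$ below $0$, which is allowed since $L<0$. In the second case it increases $\lambda_j$ above $1$, which is allowed since $U>1$. The other coordinates stay fixed in $[0,1]\subset(L,U)$. Hence $\bar\lambda+\epsilon d\in(L,U)^M$ for all sufficiently small $\epsilon>0$. By differentiability,
\[
F(\bar\lambda+\epsilon d)=F(\bar\lambda)-\epsilon g_j^2+o(\epsilon).
\]
This is strictly less than $F(\bar\lambda)$ for small $\epsilon$, and $\|\epsilon d\|\to0$, which gives the required arbitrarily close point. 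The same conclusion follows by taking $d=-\Pi g$, where $\Pi g$ keeps only the components of $g$ that are nonzero at the boundary; then $\langle g,d\rangle<0$.

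\emph{Main obstacle.} The step needing care is the interior-point claim in Step 3. Coordinates sitting exactly at $0$ or $1$ with $g_i=0$ must remain inside the open box $(L,U)^M$. They do, since $L<0$ and $U>1$ strictly, so $[0,1]\subset(L,U)$.

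A second point to note: "local minimizer on $[0,1]^M$" is used only to derive the KKT signs in Step 1. The proof needs differentiability only in a neighborhood of $\bar\lambda$ in $\mathbb{R}^M$, which is exactly what "differentiable near $\bar\lambda$" in the statement provides.
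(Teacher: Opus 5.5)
Your proof is correct, but it leans on the local-minimizer hypothesis more than the paper's proof does. The paper observes that every point of $[0,1]^M$ lies in the interior of $(L,U)^M$. Hence $\lambda'=\bar\lambda-\epsilon g$ is admissible for small $\epsilon>0$, and $F(\lambda')=F(\bar\lambda)-\epsilon\|g\|^2+o(\epsilon)<F(\bar\lambda)$. Minimality is never used to obtain the stated conclusion; the KKT sign conditions appear only afterwards, to read the improvement as reversal from $0$ or extrapolation from $1$.

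You instead make the KKT conditions the engine of the argument. They locate a boundary coordinate $j$ whose descent sign points outward, and you escape along $e_j$ alone. Your route buys an explicit certificate that the improving point leaves $[0,1]^M$ through a single gate in a specified direction: reversal if $\bar\lambda_j=0$, extrapolation if $\bar\lambda_j=1$. That is precisely the interpretive content of the proposition. The paper's route is shorter, and it shows the conclusion holds at any point of $[0,1]^M$ with nonzero gradient. Minimality then only guarantees that the improving point lies outside $[0,1]^M$, and it does so automatically, because nearby points of $[0,1]^M$ cannot be strictly better than a local minimizer.

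Finally, your alternative direction $d=-\Pi g$ coincides with the paper's $-g$. By your Step 1, every nonzero component of $g$ already sits at a boundary coordinate, so $\Pi g=g$.
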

At $\bar\lambda_i=0$, a positive derivative calls for reversal; at
$\bar\lambda_i=1$, a negative derivative calls for extrapolation.
A decrease from $\lambda=1$ alone does not justify crossing zero.
The proposition concerns independent intervention coordinates; a shared gate
must also express the useful direction. It neither places the nonlinear
optimum at $L$ or $U$ nor guarantees that optimization finds it.

Freezing $\phi^\star$ fixes the delta operators and the meaning of these
coefficients throughout calibration. Interleaving CE would change the
operators being controlled, whereas post-hoc training isolates their terminal
use (Appendix~\ref{app:rw-posthoc}). This design complements task arithmetic
and entropy-based coefficient learning
\citep{ilharco2023editing,yang2024adamergingadaptivemodelmerging}; its distinctive intervention
is token- and module-conditioned signed scaling of one completed SFT delta.

%% file: Sections/Experiment.tex
\section{Experiments}
\label{sec:experiments}

In this section, we evaluate \Method{} across mathematical reasoning and code generation on multiple backbone models. Additional experiments, including ablation studies, realized gate behavior, and training-time analysis, are provided in Appendix~\ref{app:implementation}.

\subsection{Setup}
\label{sec:exp-setup}

\textbf{Experimental models and baselines.}
We conduct experiments on three widely used LLMs, including Qwen2.5-Math-1.5B~\citep{yang2024qwen2math}, Qwen2.5-Math-7B~\citep{yang2024qwen2math}, and Qwen3-4B-Base~\citep{yang2025qwen3}. Based on these backbones, we compare our proposed method \Method{} against representative adaptation baselines, including parameter-efficient fine-tuning methods such as LoRA~\citep{hu2022lora} and DoRA~\citep{liu2024dora}, as well as token-level optimization methods such as ProFit~\citep{liu2026profitleveraginghighvaluesignals}, DFT~\citep{wu2026generalizationsftreinforcementlearning}, EAFT~\citep{diao2026entropyadaptivefinetuningresolvingconfident}, and TALR~\citep{lin2026sft}.

\textbf{Datasets and evaluation.}
We organize our experiments by adaptation domains, including mathematical reasoning and code generation. For mathematical reasoning, we follow DFT~\citep{wu2026generalizationsftreinforcementlearning} in the choice of NuminaMath-CoT~\citep{numinamath2024} for fine-tuning and the Math500~\citep{hendrycks2021measuring}, Minerva~\citep{lewkowycz2022solving}, OlympiadBench~\citep{he2024olympiadbench}, AIME24, and AMC23 benchmarks. We further evaluate general capability retention after adaptation on MMLU-Pro~\citep{wang2024mmlu}, BBH~\citep{suzgun2023challenging}, and OpenBookQA~\citep{mihaylov2018can}. For code generation, we follow MoE$^2$-LoRA~\citep{yang2026moe2lora} in the choice of the Magicoder-OSS corpus~\citep{wei2023magicoder} for fine-tuning. We evaluate on HumanEval~\citep{chen2021evaluating}, HumanEval+~\citep{liu2023your}, and MBPP~\citep{austin2021program}.

\textbf{Implementation details.} We use two training stages throughout. In Stage I, a LoRA adapter with
rank $r=16$, scaling factor $\alpha=16$, and zero dropout is trained on all
linear layers for one epoch with total batch size 128, AdamW, peak learning
rate $5\times10^{-4}$ for math or $5\times10^{-5}$ for code, cosine decay, and 10\% warmup. In Stage II, the base
model and LoRA adapter are frozen; only the lightweight
$\tanh$-bounded gate is trained for a short period of time. To better understand the mechanism of the range of the gating component, we evaluate all 42 combinations
of lower bounds $L\in\{-6,\ldots,0\}$ and upper bounds
$U\in\{1,\ldots,6\}$, train a separate gate for each interval, and
\textbf{report the result of every interval in Figure~\ref{fig:heatmaps}}. The Ours rows in
Tables~\ref{tab:math_results} and \ref{tab:coding-results} additionally present the best intervals
in Figure~\ref{fig:heatmaps} on the corresponding evaluation benchmarks.

\subsection{Main results}
\label{sec:main-results}

\textbf{Mathematical reasoning and general-capability retention.}
Table~\ref{tab:math_results} reports the mathematical reasoning and general-capability retention results. On mathematical reasoning, \Method{} achieves state-of-the-art performance among all compared adaptation methods across all three backbones, obtaining the best results on every benchmark. This consistent improvement across different backbone families demonstrates the effectiveness of learning a bounded correction over the adaptation delta for enhancing reasoning ability. Meanwhile, \Method{} achieves strong general-capability retention with the highest retention Avg on both Qwen2.5-Math backbones and the second-highest on Qwen3-4B-Base. These results show that \Method{} combines improved mathematical reasoning with strong average general-capability retention.

\textbf{Code generation and downstream generalization.}
Table~\ref{tab:coding-results} reports the code-generation performance of \Method{}.
\Method{} consistently achieves the best overall performance across different backbone families, obtaining the highest Avg score on Qwen2.5-Math-1.5B, Qwen2.5-Math-7B, and Qwen3-4B-Base. These results extend the benefits of adaptation-delta control to code generation across the three backbones.

\input{tables/math}

\begin{figure}[ht]
\centering
\includegraphics[width=\linewidth]{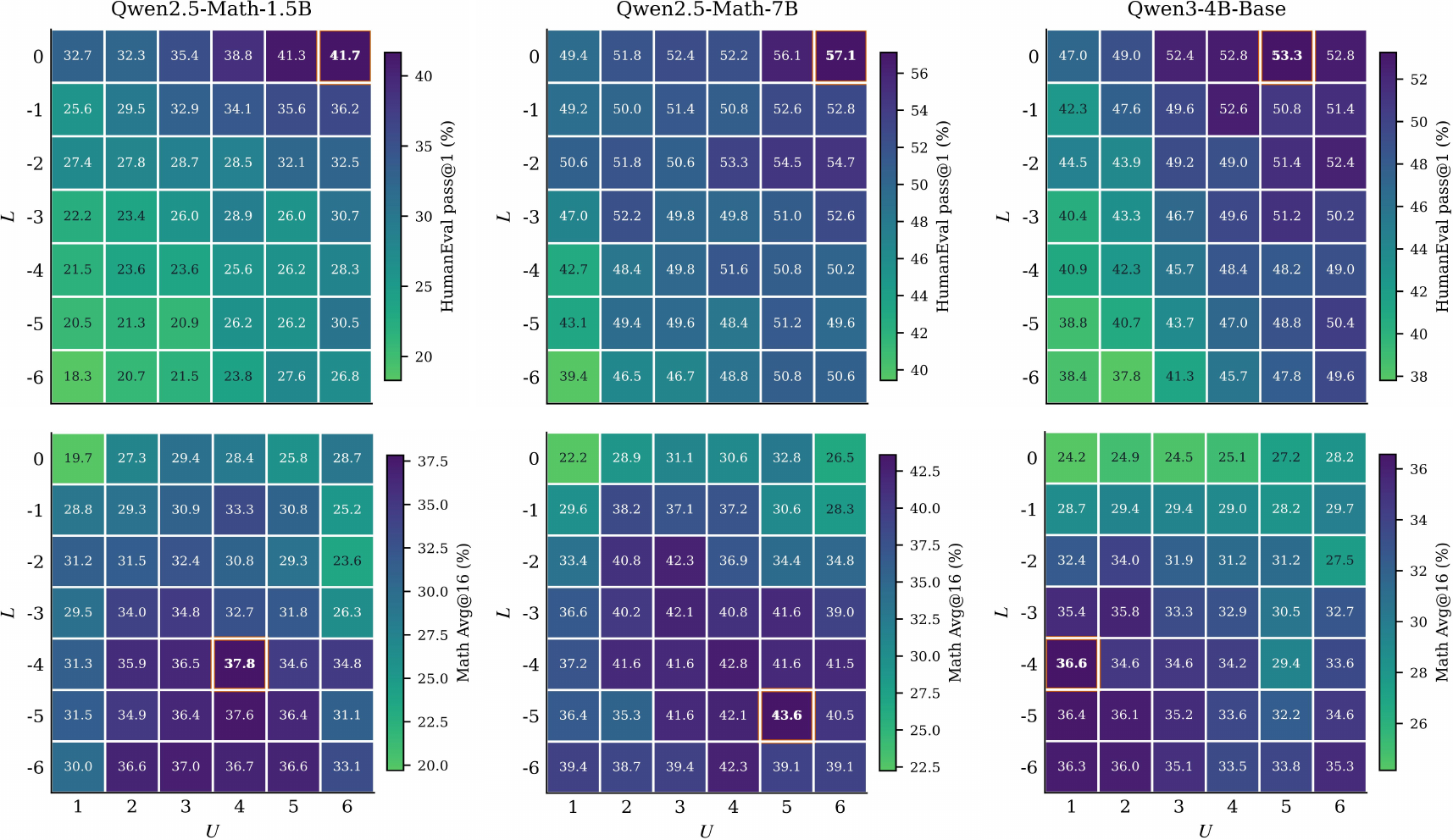}
\caption{Gate-range sensitivity across backbones. We evaluate all endpoint
pairs to characterize the intervention space. The selected maxima are reported
as oracle sweep results, while fixed-range results are used for controlled
comparisons.}
\label{fig:heatmaps}
\end{figure}
\vspace{-5pt}

\input{tables/codex}

\subsection{Discussion}
\label{sec:gate-range}

\textbf{Gate-range preferences across backbones.}
Figure~\ref{fig:heatmaps} shows a task-dependent pattern across all three backbones. High-scoring mathematical configurations occupy the negative-$L$ region. The two Qwen2.5-Math models also favor upper bounds above one, whereas Qwen3-4B-Base reaches its mathematical peak with a smaller upper bound. In the coding task, the high-scoring region lies near $L=0$ with larger positive upper bounds. The preferred balance of reversal and extrapolation therefore varies with both the task and the backbone.

\textbf{What the bounds contribute.}
At fixed $U=1$, negative lower bounds improve math on all three backbones; larger $U$ adds gains for both Qwen2.5-Math models. Expanding $[0,1]$ to $[0,6]$ improves code pass@1 on all three, while even $[0,1]$ exceeds LoRA in math. These patterns support reversal for math and extrapolation for code. Appendix~\ref{app:mechanism-analysis} reports realized gates and inference-time interventions.

\textbf{Interpreting the task difference.}
Math and code may use pretrained and newly adapted features differently. Qwen2.5-Math's specialization~\citep{yang2024qwen2math} may make some SFT features worth correcting, while code gains more from amplifying them. Qwen3-4B-Base~\citep{yang2025qwen3} shows the same split, suggesting that task--delta alignment also matters.

%% file: tables/math.tex
\begin{table*}[htbp!]
\caption{Mathematical reasoning and general-capability retention results (\%). Mathematical-reasoning scores are Avg@16. Boldface and underlining mark the best and second-best adapted scores in each column.}
\label{tab:math_results}
\centering
\scriptsize
\setlength{\tabcolsep}{2.8pt}
\renewcommand{\arraystretch}{0.92}
\newcommand{\best}[1]{{\textbf{#1}}}
\newcommand{\second}[1]{{\underline{#1}}}
\resizebox{\linewidth}{!}{
\begin{tabular}{@{}p{2.5cm}ccccc|c|ccc|c|}
\toprule
&\multicolumn{6}{c|}{\textbf{Mathematical reasoning}}&\multicolumn{4}{c}{\textbf{General Retention}}\\
\cmidrule(l){2-7}\cmidrule(r){8-11}
Method&Math500&Minerva&Olympiad&AIME24&AMC23&Avg&MMLU-P&BBH&OBQA&Avg\\
\midrule
\multicolumn{11}{c}{\textbf{Qwen2.5-Math-1.5B~\citep{yang2024qwen2math}}}\\
\midrule
\textcolor{gray}{Base}&\textcolor{gray}{21.87}&\textcolor{gray}{5.24}&\textcolor{gray}{9.18}&\textcolor{gray}{1.66}&\textcolor{gray}{12.81}&\textcolor{gray}{10.15}&\textcolor{gray}{20.24}&\textcolor{gray}{26.32}&\textcolor{gray}{30.40}&\textcolor{gray}{25.65}\\
LoRA~\citep{hu2022lora}&39.60&9.49&10.80&0.41&17.03&15.47&27.89&31.76&33.80&31.15\\
DoRA~\citep{liu2024dora}&40.10&9.39&10.69&1.24&16.88&15.66&27.58&32.20&33.60&31.13\\
ProFit~\citep{liu2026profitleveraginghighvaluesignals}&56.12&\second{21.93}&22.69&2.29&31.56&26.92&28.02&29.78&33.80&30.53\\
DFT~\citep{wu2026generalizationsftreinforcementlearning}&\second{66.49}&21.13&\second{27.61}&\second{8.34}&\second{40.47}&\second{32.81}&28.76&32.92&33.80&31.83\\
EAFT~\citep{diao2026entropyadaptivefinetuningresolvingconfident}&38.79&8.56&9.44&0.21&15.00&14.40&\second{28.78}&\second{32.93}&\best{34.80}&\second{32.17}\\
TALR~\citep{lin2026sft}&60.28&18.54&23.29&3.32&35.31&28.15&\best{28.83}&32.06&33.80&31.56\\
\rowcolor{gray!15}
Ours&\best{71.60}&\best{31.44}&\best{31.34}&\best{8.55}&\best{46.25}&\best{37.84}&28.55&\best{38.67}&32.80&\best{33.34}\\
\midrule
\multicolumn{11}{c}{\textbf{Qwen2.5-Math-7B~\citep{yang2024qwen2math}}}\\
\midrule
\textcolor{gray}{Base}&\textcolor{gray}{20.57}&\textcolor{gray}{7.89}&\textcolor{gray}{6.49}&\textcolor{gray}{3.33}&\textcolor{gray}{11.88}&\textcolor{gray}{10.03}&\textcolor{gray}{37.86}&\textcolor{gray}{45.75}&\textcolor{gray}{34.40}&\textcolor{gray}{39.34}\\
LoRA~\citep{hu2022lora}&49.94&13.91&16.43&3.75&25.62&21.93&38.55&49.12&35.40&41.02\\
DoRA~\citep{liu2024dora}&49.90&12.20&15.86&3.12&25.00&21.22&38.69&50.35&35.40&41.48\\
ProFit~\citep{liu2026profitleveraginghighvaluesignals}&65.71&24.14&31.32&7.93&43.12&34.44&39.47&49.85&36.00&\second{41.77}\\
DFT~\citep{wu2026generalizationsftreinforcementlearning}&\second{68.95}&\underline{27.75}&\second{33.67}&\second{8.96}&\second{48.28}&\second{37.52}&39.84&50.08&33.20&41.04\\
EAFT~\citep{diao2026entropyadaptivefinetuningresolvingconfident}&50.41&14.10&18.67&4.38&26.09&22.73&38.71&50.12&36.20&41.68\\
TALR~\citep{lin2026sft}&68.19&23.92&33.16&7.51&46.09&35.77&\second{40.25}&\second{51.15}&33.80&41.73\\
\rowcolor{gray!15}
Ours&\best{76.97}&\best{35.51}&\best{36.12}&\best{12.51}&\best{56.88}&\best{43.60}&\best{40.87}&\best{64.52}&30.80&\best{45.40}\\
\midrule
\multicolumn{11}{c}{\textbf{Qwen3-4B-Base~\citep{yang2025qwen3}}}\\
\midrule
\textcolor{gray}{Base}&\textcolor{gray}{46.36}&\textcolor{gray}{12.67}&\textcolor{gray}{19.29}&\textcolor{gray}{6.87}&\textcolor{gray}{24.84}&\textcolor{gray}{22.01}&\textcolor{gray}{46.39}&\textcolor{gray}{47.47}&\textcolor{gray}{37.80}&\textcolor{gray}{43.89}\\
LoRA~\citep{hu2022lora}&47.30&11.54&15.90&3.54&21.72&20.00&51.54&55.54&\second{37.40}&48.16\\
DoRA~\citep{liu2024dora}&46.53&11.29&16.07&1.86&22.34&19.62&51.39&55.37&36.80&47.85\\
ProFit~\citep{liu2026profitleveraginghighvaluesignals}&65.75&21.53&30.16&5.42&40.47&32.67&52.04&50.24&37.00&46.43\\
DFT~\citep{wu2026generalizationsftreinforcementlearning}&\second{68.26}&\second{25.00}&\second{32.27}&7.92&39.84&\second{34.66}&\second{52.08}&49.27&35.40&45.58\\
EAFT~\citep{diao2026entropyadaptivefinetuningresolvingconfident}&46.92&12.80&15.68&1.45&21.88&19.75&52.04&\second{55.56}&\best{37.60}&\best{48.40}\\
TALR~\citep{lin2026sft}&66.62&20.18&32.11&\second{9.59}&\second{40.94}&33.89&51.78&53.11&33.60&46.16\\
\rowcolor{gray!15}
Ours&\best{69.85}&\best{25.41}&\best{32.77}&\best{9.99}&\best{44.84}&\best{36.57}&\best{52.25}&\best{55.97}&36.80&\second{48.34}\\
\bottomrule
\end{tabular}
}
\end{table*}

%% file: tables/codex.tex
\begin{table}[htbp!]
\caption{Code-generation results (\%). Boldface and underlining mark the best and second-best adapted scores in each column.}
\label{tab:coding-results}
\centering
\scriptsize
\setlength{\tabcolsep}{2.5pt}
\renewcommand{\arraystretch}{0.92}
\resizebox{\linewidth}{!}{
\begin{tabular}{@{}p{2.5cm}ccc|c|ccc|c|ccc|c|}
\toprule
&\multicolumn{4}{c|}{\textbf{Qwen2.5-Math-1.5B}}&\multicolumn{4}{c|}{\textbf{Qwen2.5-Math-7B}}&\multicolumn{4}{c}{\textbf{Qwen3-4B-Base}}\\
\cmidrule(lr){2-5}\cmidrule(lr){6-9}\cmidrule(lr){10-13}
Method&HE&HE+&MBPP&Avg&HE&HE+&MBPP&Avg&HE&HE+&MBPP&Avg\\
\midrule
\textcolor{gray}{Base}&\textcolor{gray}{28.66}&\textcolor{gray}{17.68}&\textcolor{gray}{32.80}&\textcolor{gray}{26.38}&\textcolor{gray}{42.28}&\textcolor{gray}{39.02}&\textcolor{gray}{34.20}&\textcolor{gray}{38.50}&\textcolor{gray}{43.09}&\textcolor{gray}{39.02}&\textcolor{gray}{37.00}&\textcolor{gray}{39.70}\\
LoRA~\citep{hu2022lora}&34.15&25.61&30.80&30.19&48.37&43.29&\textbf{38.00}&43.22&\underline{51.22}&\underline{45.73}&\underline{40.20}&\underline{45.72}\\
DoRA~\citep{liu2024dora}&31.91&28.05&31.20&30.39&46.14&43.29&36.20&41.88&50.81&45.12&40.00&45.31\\
DFT~\citep{wu2026generalizationsftreinforcementlearning}&34.96&29.88&33.60&32.81&48.58&42.07&36.00&42.22&44.51&38.41&37.20&40.04\\
ProFit~\citep{liu2026profitleveraginghighvaluesignals}&35.57&30.49&\underline{33.80}&33.29&47.56&43.29&35.60&42.15&48.78&43.90&40.00&44.23\\
EAFT~\citep{diao2026entropyadaptivefinetuningresolvingconfident}&31.30&29.27&32.80&31.12&46.75&41.46&36.40&41.54&45.73&40.85&39.80&42.13\\
TALR~\citep{lin2026sft}&\underline{36.59}&\textbf{33.54}&\textbf{34.00}&\underline{34.71}&\underline{49.39}&\underline{44.51}&37.00&\underline{43.63}&44.72&39.63&37.80&40.72\\
\rowcolor{gray!15}
Ours&\textbf{41.67}&\underline{31.10}&31.40&\textbf{34.72}&\textbf{57.11}&\textbf{46.95}&\textbf{38.00}&\textbf{47.35}&\textbf{53.25}&\textbf{47.56}&\textbf{41.20}&\textbf{47.34}\\
\bottomrule
\end{tabular}
}
\end{table}

%% file: Sections/Appendix.tex
\providecommand{\sg}[1]{\operatorname{sg}\!\left[#1\right]}
\providecommand{\E}{\mathbb{E}}
\providecommand{\Cov}{\operatorname{Cov}}
\providecommand{\avg}[1]{\frac{1}{|\mathcal T|}\sum_{t\in\mathcal T} #1}

\newpage
\appendix


\section{Notation}
We list the notation used in this paper in Table~\ref{tab:notation}.
\begin{table}[h]
\caption{Notation used throughout the paper.}
\label{tab:notation}
\centering
\small
\setlength{\tabcolsep}{5pt}
\begin{tabular}{ll}
\toprule
Symbol & Definition \\
\midrule
$x$ & Input prompt \\
$y=(y_1,\dots,y_T)$ & Demonstrated response tokens \\
$c_t=(x,y_{<t})$ & Prefix context at token position $t$ \\
$a\in\mathcal V$ & Next-token action from vocabulary $\mathcal V$ \\
$\pi_\theta(a\mid c_t)$ & Model next-token distribution parameterized by $\theta$ \\
$z_t$ & Output logits at position $t$ \\
$H_t$ & Predictive entropy of $\pi_\theta(\cdot\mid c_t)$ \\
$A_t^{\mathrm{ent}}(a)$ & Entropy advantage:
$\log\pi_\theta(a\mid c_t)+H_t$ \\
\midrule
$\theta_0$ & Pretrained model parameters \\
$\phi^\star$ & Frozen LoRA parameters after Stage-I SFT \\
$\Delta\phi^\star$ & Learned SFT delta) \\
$W_\ell$ & Frozen pretrained weight of layer $\ell$ \\
$\Delta W_\ell$ & LoRA delta added to layer $\ell$ \\
$h_{\ell,t}$ & Hidden representation at layer $\ell$, position $t$ \\
$\lambda_{\ell,t}$ & Gate coefficient applied to the frozen delta \\
$\psi$ & Trainable gate parameters \\
$g_{\psi,\ell,t}$ & Gate logit before transformation \\
\midrule
$w_t$ & Token weight used by training-time reweighting methods \\
$\mathcal L_{\mathrm{SFT}}$ & Standard supervised fine-tuning loss \\
$\mathcal L_w$ & Weighted token-level SFT objective \\
$F_k(\psi)$ & Top-$k$ entropy objective optimized by the gate \\
$q_t$ & Renormalized top-$k$ predictive distribution \\
$V_k(\psi)$ & Analytical task-value proxy under gate parameters \\
$e_\psi$ & Entropy--task advantage mismatch in gate space \\
\midrule
$L,U$ & Lower and upper bounds of the signed gate range \\
$\lambda<0$ & delta reversal regime \\
$0\leq\lambda<1$ & delta suppression regime \\
$\lambda=1$ & Original SFT delta strength \\
$\lambda>1$ & delta extrapolation regime \\
\bottomrule
\end{tabular}
\end{table}

\section{Policy-loss Identities and Their Assumptions}
\label{app:rw-onpolicy}
\subsection{An exact gradient identity for SFT}
Let $q_t^{\mathrm{demo}}(a)=\mathbf 1[a=y_t]$ and assume a finite vocabulary
with strictly positive softmax probabilities. Since demonstrated contexts
are held fixed,
\begin{align}
 \nabla_\theta\mathcal L_w
 &=-\avg{\sum_a q_t^{\mathrm{demo}}(a)w_t
                     \nabla_\theta\log\pi_t(a)} \notag\\
 &=-\avg{\E_{a\sim\pi_t}\!\left[
       \frac{w_t\mathbf 1[a=y_t]}{\pi_t(a)}
       \nabla_\theta\log\pi_t(a)\right]}.
 \label{eq:rw-is-sft}
\end{align}
Equation~\eqref{eq:rw-is-sft} is the same expected gradient, not the derivative
of an expected reward obtained by differentiating the importance ratio.
The coefficient is stopped in the surrogate. The factor $1/\pi_t(y_t)$ is
cancelled in expectation by the sampling probability; it does not make the
deterministic CE logit gradient unbounded. This representation is related to
the RL analysis motivating DFT \citep{wu2026generalizationsftreinforcementlearning}.

An action-independent baseline can be subtracted because
$\E_{\pi_t}\nabla\log\pi_t=0$. For example, subtracting $w_t$ gives a signed
coefficient on the full vocabulary without changing the weighted CE gradient.
Therefore, signed coefficients alone are not an invariant distinction between
SFT and entropy minimization. The relevant objects are the induced gradients
and their trainable coordinates: one is a reweighted demonstration CE
direction; the other is entropy feedback applied to frozen-delta controls.
Neither representation makes the demonstrated prefix distribution on-policy.

\subsection{Reference-token entropy threshold}
For a distribution $p$ and reference token $y$, consider the
\emph{probability-space} path $p_\epsilon=(1-\epsilon)p+\epsilon e_y$.
Direct differentiation gives
\begin{equation}
 \left.\frac{\mathrm d}{\mathrm d\epsilon}H(p_\epsilon)\right|_0
 =-\log p_y-H(p)=-A^{\mathrm{ent}}(y).
 \label{eq:rw-mixture-path}
\end{equation}
Thus $p_y>\exp[-H(p)]$ identifies entropy decrease along this particular
mixture path. This threshold connects the reference token to the signed
entropy advantage; it is not an optimal weighting rule for ordinary SFT.
Indeed, a CE step in logits follows $z(\eta)=z+\eta(e_y-p)$ and induces
\begin{equation}
 \left.\frac{\mathrm dp}{\mathrm d\eta}\right|_0
   =C(p)(e_y-p),
 \qquad C(p)=\operatorname{diag}(p)-pp^\top,
 \label{eq:rw-ce-probability-path}
\end{equation}
which generally differs from $e_y-p$. A parameter update additionally depends
on the network Jacobian. Our method consequently uses the actual gate
sensitivity of entropy rather than treating the reference threshold as a
sufficient test for CE or feature correction.

\section{Token Correction Objectives}
\label{app:rw-catalogue}
Table~\ref{tab:rw-correctors} lists coefficients of the weighted CE component.
Here $p_t=\pi_t(y_t)$, $\ell_t=-\log p_t$, and $m_t$ is a preprocessing mask.
A method belongs to Eq.~\eqref{eq:rw-weighted} as a gradient objective only
when its coefficient is held fixed during differentiation.
\begin{table}[t]
\centering
\small
\setlength{\tabcolsep}{5pt}
\renewcommand{\tabularxcolumn}[1]{m{#1}}
\newcommand{\correctorcell}[1]{\parbox[c][2.8\baselineskip][c]{\linewidth}{\raggedright #1}}
\begin{tabularx}{\linewidth}{@{}>{\raggedright\arraybackslash}m{0.20\linewidth} >{\raggedright\arraybackslash}m{0.29\linewidth} >{\raggedright\arraybackslash}X@{}}
\toprule
Method & CE coefficient $w_t$ & Signal / qualification\\
\midrule
\correctorcell{SFT} & \correctorcell{$1$} & \correctorcell{Uniform demonstration weighting.}\\
\correctorcell{DFT} & \correctorcell{$\sg{p_t}$} & \correctorcell{Probability-weighted CE \citep{wu2026generalizationsftreinforcementlearning}.}\\
\correctorcell{ProFit} & \correctorcell{$\mathbf 1[\sg{p_t}>\tau_p]$} & \correctorcell{Probability mask \citep{liu2026profitleveraginghighvaluesignals}.}\\
\correctorcell{EAFT} & \correctorcell{$H_t^{\mathrm{top}\text{-}20}/\log20$} & \correctorcell{Normalized top-$20$ entropy; frozen-coefficient CE interpretation \citep{diao2026entropyadaptivefinetuningresolvingconfident}.}\\
\correctorcell{TALR} & \correctorcell{$\max\{w_{\min},\sg{e^{-\ell_t/\tau_\ell}}\}$} & \correctorcell{Loss-dependent weight, dynamic temperature, and floor \citep{lin2026sft}.}\\
\correctorcell{Offline token mask} & \correctorcell{$m_t\in\{0,1\}$} & \correctorcell{Fixed retained positions, e.g., token cleaning \citep{pang2025token}.}\\
\bottomrule
\end{tabularx}
\caption{Nonnegative coefficients of demonstration-action CE components.
Entries describe CE components. The EAFT row reports its published entropy
weight without specifying a stop-gradient convention.}
\label{tab:rw-correctors}
\end{table}

\paragraph{Differentiated weights.}
If a coefficient $w_t(\theta)$ is differentiated, then
\begin{equation}
 \nabla_\theta\bigl(w_t(\theta)\ell_t(\theta)\bigr)
 =w_t\nabla_\theta\ell_t+\ell_t\nabla_\theta w_t.
 \label{eq:rw-weight-derivative}
\end{equation}
Proposition~\ref{prop:rw-onesided} applies to the first term only. The
published EAFT loss uses normalized predictive entropy as its weighting
statistic, rather than an explicit two-argument function of reference
probability and entropy. Reference probability still enters the CE factor.
Its implementation's gradient convention must be specified independently.
For TALR, the practical rule uses a weight floor and a temperature based on
batch loss statistics; the nonnegative-ray conclusion is unchanged.

\paragraph{Objectives outside this class.}
EKSFT includes masked CE together with entropy and KL terms
\citep{liu2026eksft}. Its CE component can be entered in
Eq.~(\ref{eq:rw-weighted}), but its entire objective cannot. Likewise,
\citet{taheri2026forgettingnewmechanismbetter} explicitly subtract a loss on negative tokens.
That method is not constrained by nonnegative token weights. The
one-sidedness result therefore applies to nonnegative weighted CE components.

\section{Empirical Motivation: Delta Entropy and Local Feature Selection}
\label{app:rw-diagnostic}

\label{app:entropy-diagnostic}

Before training the gate, we ask whether the entropy effect of a \emph{completed}
LoRA delta carries directional information about how that delta should be
used. For matched teacher-forced prefixes, define
\begin{equation}
    \Delta H_t
    =
    H(\pi_{\mathrm{LoRA},t})-H(\pi_{\mathrm{Base},t}).
    \label{eq:rw-delta-entropy}
\end{equation}
The diagnostic suppresses delta contributions at positions selected by
the sign of $\Delta H_t$, with the base model and LoRA adapter frozen. This
is a hard forward intervention without gate training.
Table~\ref{tab:entropy-statistic} compares sign-based suppression with LoRA and Ours across five math benchmarks.

\begin{table}[t]
\caption{Zero-threshold entropy-sign ablation on \Qwen{}-1.5B. Negative and Positive suppress deltas where $\Delta H<0$ and $\Delta H>0$, respectively. Scores are Avg@16. Boldface and underlining mark the best and second-best scores in each column.}
\label{tab:entropy-statistic}
\centering
\small
\setlength{\tabcolsep}{3.5pt}
\renewcommand{\arraystretch}{1.1}
\begin{tabular*}{\linewidth}{@{\extracolsep{\fill}}lrrrrrr@{}}
\toprule
Method & Math500 & Minerva & Olympiad & AIME24 & AMC23 & $\Avg$ \\
\midrule
Negative & 30.85 & 6.54 & 8.85 & 1.24 & 15.47 & 12.59 \\
LoRA & 39.60 & 9.49 & 10.80 & 0.41 & 17.03 & 15.47 \\
Positive & \underline{47.99} & \underline{15.24} & \underline{20.08} & \underline{4.16} & \underline{25.47} & \underline{22.59} \\
Ours & \textbf{71.60} & \textbf{31.44} & \textbf{31.34} & \textbf{8.55} & \textbf{46.25} & \textbf{37.84} \\
\bottomrule
\end{tabular*}
\end{table}

Suppressing entropy-decreasing delta contributions yields 12.59 Math
Avg@16, compared with 15.47 for LoRA, while suppressing
entropy-increasing contributions yields 22.59. This ordering motivates using entropy change
to select which frozen delta contributions to weaken and learning a
continuous correction gate. The diagnostic measures finite differences
between frozen models; the local alignment conditions for task-improving
entropy descent are developed separately in Appendix~\ref{app:rw-proofs}.

\section{Causal Gate Calculus, Top-$k$ Entropy, and Stopped Gradients}
\label{app:rw-calculus}
\subsection{Intervention coordinates versus shared gate parameters}
Collect all module--position gate values for the fixed examples into
$\boldsymbol\lambda$. Let $G_t(\boldsymbol\lambda;\theta_0,\phi^\star)$ be the
logits when these values are supplied as external controls. The learned model
satisfies $z_{\psi,t}=G_t(\Lambda(\psi))$, where $\Lambda$ includes the dependence
of gates on their current hidden inputs. At a differentiable operating point,
\begin{equation}
 z_t(\boldsymbol\lambda+\delta\lambda)
 =z_t(\boldsymbol\lambda)+J_t^\lambda\delta\lambda
     +o(\|\delta\lambda\|),\qquad
 J_t^\psi=J_t^\lambda B,\quad
 B=\frac{\partial\Lambda}{\partial\psi}.
 \label{eq:rw-causal-jacobian}
\end{equation}
A column $d_{t,i}$ of $J_t^\lambda$ is zero when gate $i$ is outside the
causal computation of prediction $t$. Otherwise it may affect that prediction
even if located at an earlier sequence position. Consequently,
\begin{equation}
 \frac{\partial F_k}{\partial\lambda_i}
 =\frac{1}{\log k}\avg{\left\langle
       \nabla_{z_t}H(q_t),d_{t,i}\right\rangle}.
 \label{eq:rw-all-position-credit}
\end{equation}
No globally affine base-to-SFT logit model is assumed. Frozen $\Delta W_m$
fixes an operator, not $\Delta W_mh_{m,s}$ across inputs or training steps.
A scalar gate rescales the entire module delta at a position; it cannot
independently separate all semantic components within that delta.
Likewise, $B^\top$ is a chain-rule pullback, not generally an orthogonal
projection. An advantageous independent intervention need not be expressible
by shared gate parameters.

\subsection{Exact policy loss for the implemented conditional entropy}
Fix a neighborhood with unchanged top-$k$ support $S_t$. Extend $q_t$ by zero
outside $S_t$, and let $\mathcal H_t=H(q_t)$.
Then
\begin{equation}
 \frac{\partial\mathcal H_t}{\partial z_t(a)}
 =\begin{cases}
   -q_t(a)\bigl(\log q_t(a)+\mathcal H_t\bigr),&a\in S_t,\\
   0,&a\notin S_t.
  \end{cases}
 \label{eq:rw-topk-logit-gradient}
\end{equation}
Since $\E_{q_t}A_{k,t}^{\mathrm{ent}}=0$ and
$\log q_t(a)=\log\pi_t(a)-\log\sum_{b\in S_t}\pi_t(b)$,
\begin{equation}
 \nabla_\psi\mathcal H_t
 =-\E_{q_t}[A_{k,t}^{\mathrm{ent}}\nabla_\psi\log q_t]
 =-\E_{q_t}[A_{k,t}^{\mathrm{ent}}\nabla_\psi\log\pi_t].
 \label{eq:rw-topk-policy}
\end{equation}
Thus the implemented loss has the policy surrogate in
Eq.~\eqref{eq:rw-policy} with $\mu_t=q_t$ and
$A_t=A_{k,t}^{\mathrm{ent}}/\log k$. It uses all actions in the retained
support, not the full vocabulary when $k<|\mathcal V|$.
At support ties, the smooth proof is replaced by a branchwise analysis; the
reported procedure recomputes top-$k$ every forward pass.

For support mass $m_t=\sum_{a\in S_t}\pi_t(a)$ and normalized tail
$\widetilde\pi_t$, full entropy decomposes as
\begin{equation}
 H(\pi_t)=h_{\mathrm b}(m_t)+m_tH(q_t)
               +(1-m_t)H(\widetilde\pi_t).
 \label{eq:rw-entropy-tail}
\end{equation}
Decreasing $H(q_t)$ does not by itself decrease these other terms. Similarly,
$V_k$ is a conditional action value. It is not the full-vocabulary expected
value unless the corresponding tail effects are controlled.

\subsection{What detaching a gate input changes}
For one module $u=Wh+\lambda(h)\Delta Wh$, the complete hidden-state Jacobian is
\begin{equation}
 \frac{\partial u}{\partial h}
 =W+\lambda\Delta W+(\Delta Wh)(\nabla_h\lambda)^\top.
 \label{eq:rw-full-module-jacobian}
\end{equation}
Using $\lambda=\lambda(\sg{h})$ drops the last term from the backward graph.
Hidden states still depend on upstream gates, so freezing $W$ and $\Delta W$
does not remove this difference. Write $\mathrm D_\psi F_k$ for the resulting
semi-gradient and $\nabla_\psi F_k$ for the ordinary derivative of the numerical
forward function. Entropy and policy surrogates remain gradient-equivalent
\emph{within the same stopped graph}, but ordinary descent guarantees require
an additional check.

Define scaled complete and implemented directions by
\begin{equation}
 G_k=-\log k\,\nabla_\psi F_k,
 \qquad \widehat G_k=-\log k\,\mathrm D_\psi F_k,
 \qquad b_{\mathrm{sg}}=\widehat G_k-G_k.
 \label{eq:rw-stopped-directions}
\end{equation}
For the conditional task value, Eq.~\eqref{eq:rw-alignment-identity} becomes
\begin{equation}
 \widehat G_k=\beta\nabla_\psi V_k+e_\psi+b_{\mathrm{sg}}.
 \label{eq:rw-stopped-alignment}
\end{equation}
Hence Theorem~\ref{thm:rw-improvement} also holds for the semi-gradient step if
$\|e_\psi+b_{\mathrm{sg}}\|\leq\rho\beta\|\nabla_\psi V_k\|$, with $\rho<1$.
This is an additional condition, not a consequence of freezing.

For the full value $V=\avg{\E_{\pi_t}r_t}$, define
$b_{\mathrm{tail}}=\beta(\nabla_\psi V_k-\nabla_\psi V)$. Then
\begin{equation}
 \widehat G_k
 =\beta\nabla_\psi V+
      \underbrace{e_\psi+b_{\mathrm{sg}}+b_{\mathrm{tail}}}_{e_{\mathrm{total}}}.
 \label{eq:rw-total-error}
\end{equation}
The sufficient condition becomes
$\|e_{\mathrm{total}}\|\leq\rho\beta\|\nabla_\psi V\|$.
These terms separate feature-selection mismatch, stopped-gradient error, and
support truncation. The experiments do not directly measure them.
Complete-gradient gate training sets $b_{\mathrm{sg}}=0$ but is a distinct backward-pass variant; it is not
substituted for the reported method here.

\section{Post-hoc Calibration and a Fixed delta Reference}
\label{app:rw-posthoc}
Write the entropy objective as $F_\phi(\psi)$ to expose its dependence on the
LoRA factors. \Method{} first obtains $\phi^\star$ and then optimizes the fixed
forward objective $F_{\phi^\star}(\psi)$. The delta operators, not necessarily
the logit tangents, remain fixed. In particular, setting every gate to one
continues to identify the same Stage-I model, and setting every gate to zero
recovers the base when all other parameters are unchanged.

For interleaved iterates $(\phi^{(j)},\psi^{(j)})$, adding and subtracting
$F_{\phi^{(j)}}(\psi^{(j+1)})$ yields
\begin{align}
 &F_{\phi^{(j+1)}}(\psi^{(j+1)})-F_{\phi^{(j)}}(\psi^{(j)})\notag\\
 &=\underbrace{F_{\phi^{(j)}}(\psi^{(j+1)})
                   -F_{\phi^{(j)}}(\psi^{(j)})}_{\Delta_{\mathrm{gate}}^{(j)}}
  +\underbrace{F_{\phi^{(j+1)}}(\psi^{(j+1)})
                   -F_{\phi^{(j)}}(\psi^{(j+1)})}_{\Delta_{\mathrm{delta}}^{(j)}}.
 \label{eq:rw-drift}
\end{align}
The second term has no fixed sign. Freezing removes it and isolates the
question of how to use a completed delta. This identity does not establish
that post-hoc optimization outperforms joint or alternating optimization:
a joint CE--entropy objective can also be a fixed function of its combined
parameters.

Even for frozen $\phi^\star$, monotone loss descent requires a genuine
descent direction and an appropriate step size. The deterministic,
complete-gradient statements in Appendix~\ref{app:rw-proofs} are local to
smooth branches; minibatch noise, optimizer state, top-$k$ changes, and
stopped gate inputs require their own conditions. The post-hoc design fixes
the intervention reference, not these optimization issues.

\section{More Experiments}
\label{app:implementation}

\subsection{More Implementation Details}

\paragraph{Stage-I training.}
For math, the first-stage LoRA is trained for one epoch on 100k NuminaMath-CoT examples, following the data choice of DFT~\citep{wu2026generalizationsftreinforcementlearning}. For code, a separate first-stage LoRA is trained for one epoch on 75{,}197 Magicoder-OSS examples, following the data choice of MoE$^2$-LoRA~\citep{yang2026moe2lora}. Both use total batch size 128, rank 16, alpha 16, all linear modules as targets, and dropout 0; the learning rates are $5\times10^{-4}$ for math and $5\times10^{-5}$ for code. Stage II treats the resulting LoRA operators as the fixed delta family to be calibrated; it does not relearn their factors.

\paragraph{Stage-II gate training.}
For math, the gate is trained on the first 20k NuminaMath-CoT examples; for code, it is trained on a 20k-example subset of Magicoder-OSS. Each gate run uses 157 optimizer steps with total batch size 128, learning rate $5\cdot10^{-4}$, cosine schedule, zero warmup, and a cutoff length of 4096 tokens. Each targeted module owns a linear map
\[
g_{\psi,m,t}=w_m^\top\operatorname{sg}(x_{m,t})+b_m
\]
from its detached current input to a scalar logit, and these gate parameters are the only Stage-II trainable components. The reported runs use a fixed deterministic FP32 tiny initialization for the gate weights and zero bias. Step-zero parity with the frozen Stage-I model is verified before optimization. The normalized top-$k$ entropy uses $k=100$ and response-token predictions only.

Detaching gate inputs removes the derivative through their dependence on
upstream gates. Stage II therefore uses the semi-gradient analyzed in
Appendix~\ref{app:rw-calculus}. Likewise, the top-$k$ objective is the conditional entropy of the retained support, not the full-vocabulary entropy. The policy-loss identity remains exact on a locally
fixed support within the same stopped graph; transferring a complete-gradient task-improvement statement to the implemented update requires the additional error condition in Eq.~\eqref{eq:rw-stopped-alignment}.

\paragraph{Evaluation details.}
Mathematical-reasoning and general-capability retention evaluations use
task-native zero-shot prompts without a system message and EOT-aligned
assistant termination.  All methods are evaluated with vLLM inference under the same decoding configuration.


\subsection{Ablation studies}
\label{app:ablations}

\begin{table}[t]
\caption{Calibration-loss ablation on \Qwen{}-1.5B (\%). Scores are Avg@16; boldface and underlining mark the best and second-best values.}
\label{tab:gate-objective-ablation}
\centering
\small
\setlength{\tabcolsep}{3.5pt}
\renewcommand{\arraystretch}{1.1}
\begin{tabular*}{\linewidth}{@{\extracolsep{\fill}}lrrrrrr@{}}
\toprule
Method & Math500 & Minerva & Olympiad & AIME24 & AMC23 & $\Avg$ \\
\midrule
LoRA & 39.60 & 9.49 & 10.80 & 0.41 & 17.03 & 15.47 \\
LoRA +20k & 40.12 & 9.72 & \underline{11.58} & \underline{1.04} & \underline{18.91} & \underline{16.27} \\
SCALE w/ CE & \underline{41.97} & \underline{10.31} & 11.36 & 0.82 & 16.41 & 16.17 \\
\midrule
SCALE w/ Entropy & \textbf{71.60} & \textbf{31.44} & \textbf{31.34} & \textbf{8.55} & \textbf{46.25} & \textbf{37.84} \\
\bottomrule
\end{tabular*}
\end{table}


\begin{table*}[t]
\caption{Three-seed mathematical-reasoning results on \Qwen{}-1.5B (\%). Scores are Avg@16, reported as mean $\pm$ sample standard deviation.}
\label{tab:multiseed-math}
\centering
\small
\setlength{\tabcolsep}{3pt}
\renewcommand{\arraystretch}{1.1}
\resizebox{\textwidth}{!}{%
\begin{tabular}{@{}lcccccc@{}}
\toprule
Method & Math500 & Minerva & Olympiad & AIME24 & AMC23 & $\Avg$ \\
\midrule
LoRA & $39.52 \pm 0.07$ & $9.60 \pm 0.33$ & $10.59 \pm 0.39$ & $0.62 \pm 0.21$ & $16.72 \pm 1.43$ & $15.41 \pm 0.44$ \\
DFT & $\underline{65.30 \pm 1.03}$ & $\underline{21.42 \pm 1.89}$ & $\underline{27.22 \pm 0.37}$ & $\underline{7.64 \pm 1.78}$ & $\underline{39.95 \pm 0.65}$ & $\underline{32.31 \pm 0.92}$ \\
\midrule
\rowcolor{gray!15}
\Method{} & $\mathbf{71.73 \pm 0.57}$ & $\mathbf{27.18 \pm 2.47}$ & $\mathbf{31.56 \pm 0.31}$ & $\mathbf{10.21 \pm 1.08}$ & $\mathbf{48.44 \pm 1.18}$ & $\mathbf{37.83 \pm 0.20}$ \\
\bottomrule
\end{tabular}%
}
\end{table*}

\paragraph{Entropy provides a stronger calibration signal.}
Table~\ref{tab:gate-objective-ablation} compares cross-entropy (CE) and entropy gate calibration with pure LoRA continued on the same 20k examples. The gate uses signed tanh with range $[-6,3]$; continued LoRA has no gate. Entropy reaches 37.84 Math Avg@16, versus 16.27 for continued LoRA and 16.17 for CE.


\paragraph{Stable performance across training seeds.}
Across seeds 42, 3407, and 2025, \Method{} uses the fixed $[-6,3]$ gate range and achieves $37.83\pm0.20$ Math Avg@16 (Table~\ref{tab:multiseed-math}). It leads on all five benchmarks.

\vspace{-0.8\baselineskip}
\input{Sections/MechanismAnalysis}

\input{Sections/TrainingTime}

\section{Proofs and Structural Conditions for Sections 2--4}
\label{app:rw-proofs}
\subsection{Proof of the entropy-advantage identity}
\begin{proof}[Proof of Lemma~\ref{lem:rw-entropy}]
For a strictly positive finite softmax distribution,
\begin{align}
 \nabla H(p)
 &=-\sum_a p(a)(\log p(a)+1)\nabla\log p(a)\notag\\
 &=-\sum_a p(a)(\log p(a)+H(p))\nabla\log p(a),
 \label{eq:rw-proof-entropy}
\end{align}
because $\sum_a p(a)\nabla\log p(a)=\nabla\sum_a p(a)=0$.
Moreover, $\E_p[\log p+H(p)]=0$.
Since $\partial\log p(a)/\partial z(b)=\mathbf1[a=b]-p(b)$,
$\partial H/\partial z(b)=-p(b)(\log p(b)+H(p))$.
Applying the same computation to the softmax restricted to a locally fixed
support proves Eqs.~\eqref{eq:rw-topk-logit-gradient} and
\eqref{eq:rw-topk-policy}.
\end{proof}

\subsection{Local one-sidedness under weighted CE}
\begin{proof}[Proof of Proposition~\ref{prop:rw-onesided}]
Differentiating a weighted CE term with fixed weight gives
$-\nabla_z(w\ell)=w(e_y-p)$. Its $y$ coordinate is $w(1-p_y)\geq0$,
and its other coordinates are $-wp_a\leq0$. For any $a\neq y$, the direct
logit-margin change is proportional to $w(1-p_y+p_a)\geq0$.
Thus changing a nonnegative weight rescales a single direct CE direction.
\end{proof}
This property is not preserved as a coordinatewise sign guarantee after
shared-network propagation. If $K_t=\partial z_t/\partial\theta$, then
one parameter update on several tokens has first-order effect
\begin{equation}
 \Delta z_t
 =\frac{\eta}{|\mathcal T|}\sum_s
      w_s K_tK_s^\top(e_{y_s}-\pi_s)+o(\eta).
 \label{eq:rw-shared-ce}
\end{equation}
The cross-token matrices need not preserve signs. Neither this proposition
nor the on-policy representation rules out unlearning through parameter
sharing or through a different loss.

\subsection{Policy credit and the task-alignment identity}
At a fixed support, let $d_i=\partial z/\partial\lambda_i$.
The score derivative is
\begin{equation}
 \frac{\partial\log q(a)}{\partial\lambda_i}
 =d_i(a)-\E_q d_i.
 \label{eq:rw-centered-score}
\end{equation}
Since $A^{\mathrm{ent}}=z-\E_qz$, Lemma~\ref{lem:rw-entropy} yields
$-\partial H(q)/\partial\lambda_i=\Cov_q(z,d_i)$.
For fixed action values $r$, differentiation likewise gives
$\partial\E_qr/\partial\lambda_i=\Cov_q(r,d_i)$.
Averaging over all affected positions proves
Eq.~\eqref{eq:rw-gate-credit}. For shared parameters,
\begin{equation}
 \nabla_\psi V_k=\avg{\E_{q_t}[A_t^r s_t]},
 \qquad -\log k\,\nabla_\psi F_k
                  =\avg{\E_{q_t}[A_{k,t}^{\mathrm{ent}}s_t]}.
 \label{eq:rw-two-policy-gradients}
\end{equation}
Subtracting $\beta$ times the first identity from the second proves
Eq.~\eqref{eq:rw-alignment-identity}.

\paragraph{A structural sufficient condition.}
Write, on each current support,
\begin{equation}
 z_t(a)=\beta r_t(a)+b_t+\varepsilon_t(a),\qquad \beta>0.
 \label{eq:rw-signal-decomposition}
\end{equation}
Here $r_t$ is fixed independently of entropy. The decomposition alone is
always possible and imposes no restriction. Its useful condition is that the
delta error has small effect on the accessible directions. Indeed,
\begin{equation}
 A_{k,t}^{\mathrm{ent}}-\beta A_t^r
     =\varepsilon_t-\E_{q_t}\varepsilon_t,
 \qquad
 e_\psi=\avg{(J_t^\psi)^\top C(q_t)\varepsilon_t}.
 \label{eq:rw-projected-delta}
\end{equation}
The centering matrix is $C(q_t)=\operatorname{diag}(q_t)-q_tq_t^\top$,
restricted to the support. Large logit error in directions annihilated by
$(J_t^\psi)^\top C(q_t)$ does not corrupt the gate signal. Freezing the
feature operators restricts this accessible family; it does not automatically
make the delta small.

For an individual intervention coordinate, let
\makeatletter
\if@twocolumn
$h_i=\avg{\Cov_{q_t}(r_t,d_{t,i})}$ and
$b_i=\avg{\Cov_{q_t}(\varepsilon_t,d_{t,i})}$.
\else
\begin{equation*}
 h_i=\avg{\Cov_{q_t}(r_t,d_{t,i})},
 \qquad
 b_i=\avg{\Cov_{q_t}(\varepsilon_t,d_{t,i})}.
\end{equation*}
\fi
\makeatother
Then
\begin{equation}
 -\log k\,\frac{\partial F_k}{\partial\lambda_i}=\beta h_i+b_i.
 \label{eq:rw-coordinate-signal}
\end{equation}
If $h_i\neq0$ and $|b_i|<\beta|h_i|$, entropy gives the correct sign of task
contribution. The vector assumption in the main text instead guarantees
aggregate improvement with shared gates; it need not give a correct sign for
every coordinate separately.

\subsection{Proof of local task improvement and a finite step bound}
\begin{proof}[Proof of Theorem~\ref{thm:rw-improvement}]
Set $v=\nabla_\psi V_k$ and $G=\beta v+e_\psi$.
The step is $\psi^+=\psi+\eta G/\log k$.
By Cauchy--Schwarz and Assumption~\ref{ass:rw-alignment},
\begin{equation}
 \langle v,G\rangle
 \geq\beta\|v\|^2-\|v\|\|e_\psi\|
 \geq\beta(1-\rho)\|v\|^2>0.
 \label{eq:rw-positive-inner-product}
\end{equation}
The first-order expansion of $V_k$ establishes
Eq.~\eqref{eq:rw-value-improvement}; a sufficiently small step stays within
the differentiable neighborhood and makes the remainder smaller than the
strictly positive leading term.
\end{proof}
If $\nabla V_k$ is $L_V$-Lipschitz in that neighborhood, the stronger bound is
\begin{equation}
 V_k(\psi^+)-V_k(\psi)
 \geq\left[
 \frac{\eta\beta(1-\rho)}{\log k}
 -\frac{L_V\eta^2\beta^2(1+\rho)^2}{2(\log k)^2}
 \right]\|v\|^2.
 \label{eq:rw-finite-value-bound}
\end{equation}
It is positive for
$0<\eta<2(1-\rho)\log k/[L_V\beta(1+\rho)^2]$, provided the segment stays
in that neighborhood. This is a deterministic Euclidean-gradient statement,
not an unconditional guarantee for an optimizer with momentum or finite
minibatch noise.

The same proof with $G=\widehat G_k$ establishes the semi-gradient and
full-value variants in Appendix~\ref{app:rw-calculus}, replacing the delta
by $e_\psi+b_{\mathrm{sg}}$ or $e_{\mathrm{total}}$ respectively. For any actual
implemented update vector $u$, the elementary local task-improvement test is
$\langle\nabla V,u\rangle>0$. A low numerical entropy alone does not certify it.

\subsection{A margin model for helpful and harmful features}
In the binary setting, $p=\operatorname{sigmoid}(m)$ gives
\begin{equation}
 \frac{\mathrm dH}{\mathrm dp}=-m,
 \qquad \frac{\mathrm dp}{\mathrm dm}=p(1-p),
 \qquad \frac{\partial m}{\partial\lambda_i}=a_i,
 \label{eq:rw-binary-components}
\end{equation}
which proves Eq.~\eqref{eq:rw-margin}. For $m>0$, the sign of
$-\partial H/\partial\lambda_i$ equals the sign of
$\partial p/\partial\lambda_i$. With values $r(\mathrm{correct})=1$ and
$r(\mathrm{wrong})=0$, this is exactly task-value improvement.
At $m=0$ there is no first-order entropy signal; at $m<0$ it has the wrong
orientation. Thus a reliable semantic reference, not freezing alone, is needed.

For a multiclass distribution $p$, let a fixed feature tangent be
$d_i=a_i e_y+b_i\mathbf1$, where $y$ is the independently defined correct
action. Then
\begin{equation}
 \frac{\partial H}{\partial\lambda_i}
 =-a_i p_y\sum_{b\neq y}p_b(z_y-z_b),
 \qquad
 \frac{\partial p_y}{\partial\lambda_i}=a_i p_y(1-p_y).
 \label{eq:rw-multiclass-margin}
\end{equation}
The sign correspondence holds if
$z_y>\sum_{b\neq y}p_bz_b/(1-p_y)$.
A strictly highest correct logit suffices. This model concerns features that
change correct-versus-incorrect evidence in the specified form; arbitrary
multiclass deltas require the directional condition in
Eq.~\eqref{eq:rw-coordinate-signal}.
For language modeling, a demonstration token is not automatically the unique
correct action. Values can instead be assigned by a fixed external criterion
or a fixed reference continuation policy, but the fixed-prefix guarantee then
remains local and does not by itself prove improved free-running accuracy.

\subsection{Signed parameterization}
For $v=\tanh(g/\tau)$, the two branches of
Eq.~\eqref{eq:rw-transform} are
\begin{equation}
 \lambda(v)=
 \begin{cases}
  1+(1-L)v,&v<0,\\
  1+(U-1)v,&v\geq0.
 \end{cases}
 \label{eq:rw-transform-branches}
\end{equation}
Both slopes are positive when $L<0<1<U$; the branches agree at zero and
approach $L$ and $U$ as $v\to-1$ and $v\to1$. The image is $(L,U)$ with
closure $[L,U]$. Moreover, $\lambda=0$ at $v=-1/(1-L)$; more negative values
reverse the delta, and every positive $v$ extrapolates it.
At $g=0$, $\lambda=1$ exactly. The map is generally not differentiable at
that join, as its one-sided slopes in $g$ differ.

For fixed input $h$, the module-level displacement obeys
\begin{equation}
 \|u(h;\lambda)-u(h;1)\|
 \leq\max\{1-L,U-1\}\,\|\Delta Wh\|.
 \label{eq:rw-module-bound}
\end{equation}
This controls local delta scaling, not global output KL or task loss.
If $\Delta Wh\neq0$, expanding from $[0,1]$ to $(L,U)$ strictly expands that
module's attainable delta contributions. Degenerate deltas or downstream
cancellation need not enlarge the final prediction set. Training-time token
weights define a different object and are not covered by this set inclusion.

\subsection{Proof of the attenuation-boundary result}
\begin{proof}[Proof of Proposition~\ref{prop:rw-boundary}]
Every $\bar\lambda\in[0,1]^M$ lies in the interior of $[L,U]^M$.
For $g=\nabla F(\bar\lambda)\neq0$ and sufficiently small $\epsilon>0$,
$\lambda'=\bar\lambda-\epsilon g$ belongs to $(L,U)^M$. Differentiability gives
\begin{equation}
 F(\lambda')=F(\bar\lambda)-\epsilon\|g\|^2+o(\epsilon)
            <F(\bar\lambda).
 \label{eq:rw-boundary-proof}
\end{equation}
At a box-constrained local minimum, $g_i\geq0$ if $\bar\lambda_i=0$,
$g_i=0$ if $0<\bar\lambda_i<1$, and $g_i\leq0$ if $\bar\lambda_i=1$.
A nonzero gradient therefore identifies at least one blocked outward
direction: reversal from zero or extrapolation from one.
\end{proof}
If $\bar\lambda$ is a global minimum over the attenuation box, the proposition
also gives a strictly lower attainable objective in the extended box.
It does not imply endpoint optimality for nonlinear entropy. Its realization
by a shared gate requires a parameter direction $v$ with
$\langle\nabla_\lambda F,Bv\rangle<0$.
Moreover, the sign must be evaluated at the relevant boundary: a positive
derivative at one only supports attenuation initially, not an immediate jump
to a negative coefficient. A zero derivative supplies no first-order verdict
and does not exclude curvature, parameter-sharing, or subsequent-iterate effects.

\section{Limitation}
Despite the promising results, this work has several limitations that motivate future research. First, the current evaluation is limited to relatively small-scale language models. Our experiments are conducted on models up to 7B parameters, and we have not yet validated SCALE on larger-scale models (e.g., 30B+ parameters). Although the proposed gated delta calibration mechanism is parameter-efficient and theoretically independent of model size, its effectiveness and scalability on larger models remain to be investigated. Future work should evaluate whether the observed benefits persist in larger-scale settings with more complex adaptation dynamics. 

Second, our experiments are currently restricted to models from the Qwen family. Although we evaluate multiple generations and variants of Qwen models, including Qwen2.5-Math and Qwen3-Base, further validation on models from other families is necessary. In particular, evaluating SCALE on architectures and pretraining distributions beyond Qwen would provide a stronger assessment of its generality and robustness across different foundation models. 

Third, the current gate selection mechanism is not fully adaptive. In SCALE, the gate range is selected through empirical evaluation, and our experiments demonstrate that different gate ranges can effectively control the degree of delta suppression, reversal, and extrapolation. However, automatically determining the optimal intervention range remains an open problem. Developing more principled adaptive gate selection strategies, potentially based on task signals, uncertainty estimation, or learned optimization criteria, is an important direction for future research.

%% file: Sections/MechanismAnalysis.tex
\subsection{Realized gate behavior}
\label{app:mechanism-analysis}

The mathematical-reasoning gates use $[-6,3]$ for \Qwen{}-1.5B and $[-4,4]$ for \Qwen{}-7B. The code-generation gates on \Qwen{}-7B use $[0,6]$ and $[-3,3]$. The multiplier $\lambda$ scales the frozen LoRA delta: negative values reverse it, values between zero and one suppress or preserve it, and values above one extrapolate it.

\begin{table}[ht]
\caption{Distribution of gate multipliers on generated mathematical-reasoning tokens (\%).}
\label{tab:math-gate-distribution}
\centering
\small
\setlength{\tabcolsep}{5pt}
\renewcommand{\arraystretch}{1.12}
\begin{tabular*}{\linewidth}{@{\extracolsep{\fill}}lrrr@{}}
\toprule
Backbone & $\lambda<0$ & $0\leq\lambda\leq1$ & $\lambda>1$ \\
\midrule
\Qwen{}-1.5B & 39.36 & 17.81 & 42.82 \\
\Qwen{}-7B & 44.34 & 19.11 & 36.55 \\
\bottomrule
\end{tabular*}
\end{table}

\begin{table}[ht]
\caption{Mathematical-reasoning gate interventions. Scores are Math Avg@16; parentheses show the change from the original gate in percentage points.}
\label{tab:math-gate-intervention}
\centering
\small
\setlength{\tabcolsep}{5pt}
\renewcommand{\arraystretch}{1.12}
\begin{tabular*}{\linewidth}{@{\extracolsep{\fill}}lrr@{}}
\toprule
Gate applied at inference & \Qwen{}-1.5B & \Qwen{}-7B \\
\midrule
Original $\lambda$ & 37.05 & 42.52 \\
No reversal, $\max(\lambda,0)$ & 20.10 ($-16.95$) & 33.37 ($-9.15$) \\
No extrapolation, $\min(\lambda,1)$ & 30.06 ($-6.99$) & 30.06 ($-12.46$) \\
Unit interval, $\operatorname{clip}(\lambda,0,1)$ & 19.44 ($-17.61$) & 24.07 ($-18.45$) \\
\bottomrule
\end{tabular*}
\end{table}

Table~\ref{tab:math-gate-distribution} shows that the learned gates realize both reversal and extrapolation during generation. The two Qwen2.5-Math backbones assign $\lambda<0$ to 39.36\% and 44.34\% of generated-token coefficients, respectively, and $\lambda>1$ to 42.82\% and 36.55\%. Thus, most generated-token coefficients lie outside the unit interval in both models.

The forward interventions in Table~\ref{tab:math-gate-intervention} show that removing reversal lowers Math Avg@16 by 16.95 and 9.15 points on the 1.5B and 7B models. Removing extrapolation lowers it by 6.99 and 12.46 points. Constraining the gate to $[0,1]$ produces the largest reduction for each model. Both realized reversal and extrapolation contribute to mathematical-reasoning accuracy in these checkpoints.

\begin{table}[H]
\caption{Gate interventions for code generation on \Qwen{}-7B. Scores are HumanEval pass@1 (\%).}
\label{tab:code-gate-intervention}
\centering
\small
\setlength{\tabcolsep}{5pt}
\renewcommand{\arraystretch}{1.12}
\begin{tabular*}{\linewidth}{@{\extracolsep{\fill}}llrr@{}}
\toprule
Gate range & Gate applied at inference & pass@1 & Change \\
\midrule
$[0,6]$ & Original $\lambda$ & 57.11 & --- \\
          & Unit interval, $\operatorname{clip}(\lambda,0,1)$ & 47.15 & $-9.96$ \\
\midrule
$[-3,3]$ & Original $\lambda$ & 49.80 & --- \\
           & No reversal, $\max(\lambda,0)$ & 52.64 & $+2.85$ \\
           & No extrapolation, $\min(\lambda,1)$ & 45.12 & $-4.67$ \\
           & Unit interval, $\operatorname{clip}(\lambda,0,1)$ & 47.97 & $-1.83$ \\
\bottomrule
\end{tabular*}
\end{table}

Table~\ref{tab:code-gate-intervention} shows a 9.96-point decrease when the $[0,6]$ gate is restricted to the unit interval. With the $[-3,3]$ gate, removing extrapolation lowers pass@1 by 4.67 points, while removing reversal raises it by 2.85 points. The code results favor extrapolation rather than reversal in these checkpoints.

%% file: Sections/TrainingTime.tex
\subsection{Training Time}
\label{app:training-time}

Figure~\ref{fig:single-gpu-training-time} compares training time on one H200
GPU for the two Qwen2.5-Math backbones on mathematical reasoning and code
generation. SCALE denotes Stage-II gate training, averaged across five gate
ranges with one run per range; LoRA, DFT, DoRA, and TALR denote their
standalone Stage-I training runs. Full SCALE training also includes its
preceding LoRA stage. Excluding queueing and evaluation, gate training takes
20.0--46.0 minutes across the four settings, compared with 53.0--305.1
minutes for the baseline runs.

\begin{figure}[H]
\centering
\includegraphics[width=0.85\textwidth]{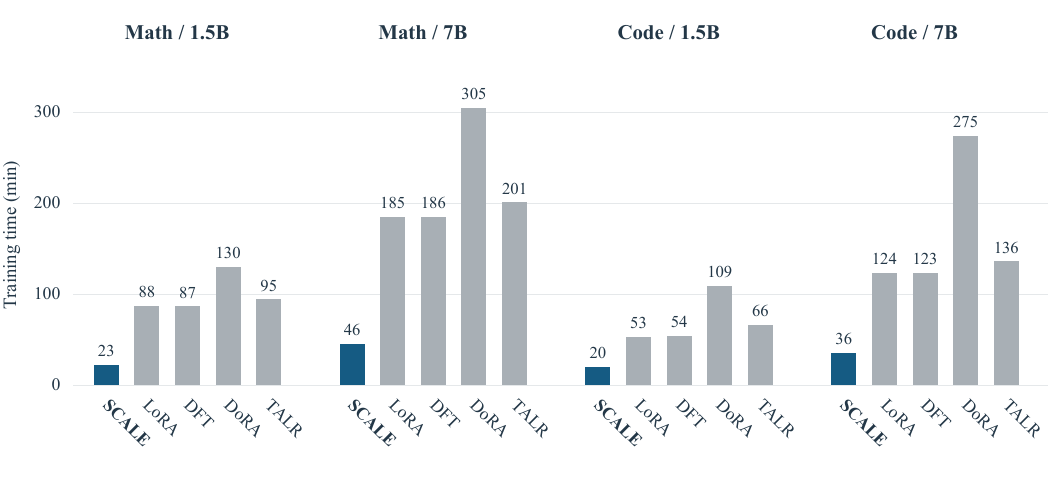}
\caption{Single-GPU training time (minutes) for mathematical reasoning and
code generation on Qwen2.5-Math-1.5B and 7B.}
\label{fig:single-gpu-training-time}
\end{figure}
\FloatBarrier